\newcommand{\round}{\ensuremath{t}}
\newcommand{\thresh}{\ensuremath{\tau}}
\newcommand{\acid}{\ensuremath{x}}
\newcommand{\obs}{\ensuremath{\mathbf{x}}}
\newcommand{\tar}{\ensuremath{y}}
\newcommand{\tars}{\mathbf{\tar}}
\newcommand{\bbf}{\ensuremath{{f\!\centerdot}}}
\newcommand{\bbfs}{\ensuremath{{\mathbf{f}\centerdot}}}
\newcommand{\labl}{\ensuremath{z}}
\newcommand{\mask}{\mathbf{o}}
\newcommand{\maskp}{o}
\newcommand{\refp}{\ensuremath{\mathbf{r}}}
\newcommand{\qparam}{\ensuremath{\phi}}
\newcommand{\mparam}{\ensuremath{\theta}}
\newcommand{\uparam}{\ensuremath{\psi}}
\newcommand{\pparam}{\ensuremath{\gamma}}
\newcommand{\errs}{\ensuremath{\boldsymbol{\epsilon}}}
\newcommand{\solnspace}{\ensuremath{\mathcal{S}}}
\newcommand{\pareto}{\ensuremath{\mathcal{S}_{\textrm{Pareto}}}}
\newcommand{\fpareto}{\ensuremath{\mathcal{F}_{\textrm{Pareto}}}}
\newcommand{\paretofn}[1]{\operatorname{Pareto}(#1)}
\newcommand{\prefrspace}{\ensuremath{\mathcal{U}}}
\newcommand{\prefr}{\ensuremath{\mathbf{u}}}
\newcommand{\pweights}{\ensuremath{\boldsymbol{\lambda}}}
\newcommand{\lablu}{a}
\newcommand{\lablf}{\labl}
\newcommand{\real}{\ensuremath{\mathbb{R}}}
\newcommand{\natno}{\ensuremath{\mathbb{N}}}
\newcommand{\obsspace}{\ensuremath{\mathcal{X}}}
\newcommand{\acidspace}{\ensuremath{\mathcal{V}}}
\newcommand{\data}{\ensuremath{\mathcal{D}}}
\newcommand{\threshparam}{\ensuremath{\delta}}
\newcommand{\prob}[1]{\ensuremath{p({#1})}}
\newcommand{\probc}[2]{\ensuremath{p({#1}|{#2}})}
\newcommand{\qrob}[2]{\ensuremath{q_{#1}({#2})}}
\newcommand{\qrobc}[3]{\ensuremath{q_{#1}({#2}|{#3}})}
\newcommand{\normalc}[2]{\ensuremath{\mathcal{N}\!\left({#1}\middle|{#2}
                         \right)}}
\newcommand{\bernc}[2]{\ensuremath{\mathrm{Bern}\!\left({#1}\middle|{#2}
                                    \right)}}
\newcommand{\categc}[2]{\ensuremath{\mathrm{Categ}({#1}|{#2})}}
\newcommand{\multinc}[2]{\ensuremath{\mathrm{Multinomial}({#1}|{#2})}}
\newcommand{\uniform}[1]{\ensuremath{\mathcal{U}({#1})}}
\newcommand{\expece}{\ensuremath{\mathbb{E}}}
\newcommand{\expec}[2]{\ensuremath{\expece_{#1}\!\left[{#2}\right]}}
\newcommand{\dkl}[2]{\ensuremath{\mathbb{D}_\mathrm{KL}\!\left[{#1}\|{#2}\right]}}
\newcommand{\softmax}[1]{\ensuremath{\mathrm{softmax}\!\left({#1}\right)}}
\newcommand{\msoftmax}[1]{\ensuremath{\mathbf{softmax}\!\left({#1}\right)}}
\newcommand{\cpe}[3]{\ensuremath{\pi^{#1}_{#2}({#3})}}
\newcommand{\indic}[1]{\ensuremath{\mathds{1}[{#1}]}}
\newcommand{\elbo}[1]{\ensuremath{\mathcal{L_{\textrm{ELBO}}}\!\left({#1}\right)}}
\newcommand{\aelbo}[1]{\ensuremath{\mathcal{L_{\textrm{A-ELBO}}}\!\left({#1}\right)}}
\newcommand{\lcpe}[1]{\ensuremath{\mathcal{L_{\textrm{CPE}}}\!\left({#1}\right)}}
\newcommand{\lpref}[1]{\ensuremath{\mathcal{L_{\textrm{Pref}}}\!\left({#1}\right)}}
\newcommand{\threshfn}[1]{\ensuremath{f_\thresh}\!\left({#1}\right)}
\newcommand{\lcpef}[1]{\ensuremath{\mathcal{L}^\lablf_{\textrm{CPE}}\!\left({#1}\right)}}
\newcommand{\lcpeu}[1]{\ensuremath{\mathcal{L}^\lablu_{\textrm{CPE}}\!\left({#1}\right)}}
\newcommand{\hvbox}[1]{\ensuremath{\mathrm{B}({#1})}}
\newcommand{\hv}[1]{\ensuremath{\mathrm{HV}({#1})}}
\newcommand{\hvi}[1]{\ensuremath{\mathrm{HVI}({#1})}}
\DeclareMathOperator*{\argmax}{arg\,\!max}
\DeclareMathOperator*{\argmin}{arg\,\!min}
\newcommand*\Let[2]{\State #1 $\gets$ #2}
\newcommand{\card}[1]{\ensuremath{|#1|}}
\newtheorem{assumption}{Assumption}
\newacronym{gp}{GP}{Gaussian process}
\newacronym{vo}{VO}{variational optimization}
\newacronym{vi}{VI}{variational inference}
\newacronym{gvi}{GVI}{generalized variational inference}
\newacronym{bo}{BO}{Bayesian optimization}
\newacronym{ei}{EI}{expected improvement}
\newacronym{pi}{PI}{probability of improvement}
\newacronym{ucb}{UCB}{upper confidence bound}
\newacronym{bore}{BORE}{Bayesian optimization by density-ratio estimation}
\newacronym{bopr}{BOPR}{Bayesian optimization with probabilistic reparametrization}
\newacronym{elbo}{ELBO}{evidence lower bound}
\newacronym{cpe}{CPE}{class probability estimator}
\newacronym{eda}{EDA}{estimation of distribution algorithms}
\newacronym{nes}{NES}{natural evolution strategies}
\newacronym{es}{ES}{evolution strategies}
\newacronym{sgd}{SGD}{stochastic gradient descent}
\newacronym{vsd}{VSD}{variational search distributions}
\newacronym{dbas}{DbAS}{design by adaptive sampling}
\newacronym{cbas}{CbAS}{conditioning by adaptive sampling}
\newacronym{ell}{ELL}{expected log-likelihood}
\newacronym{fdr}{FDR}{false discovery rate}
\newacronym{kl}{KL}{Kullback-Leibler}
\newacronym{pex}{PEX}{proximal exploration}
\newacronym{bbo}{BBO}{black-box optimization}
\newacronym{lstm}{LSTM}{long short-term memory}
\newacronym{rnn}{RNN}{recurrent neural network}
\newacronym{ml}{ML}{maximum likelihood}
\newacronym{lse}{LSE}{level set estimation}
\newacronym{lso}{LSO}{latent space optimization}
\newacronym{ntk}{NTK}{neural tangent kernel}
\newacronym{nos}{NOS}{diffusioN Optimized Sampling}
\newacronym{ga}{GA}{genetic algorithm}
\newacronym{nn}{NN}{neural network}
\newacronym{ag}{AG}{active generation}
\newacronym{moo}{MOO}{multi-objective optimization}
\newacronym{mobo}{MOBO}{multi-objective Bayesian optimization}
\newacronym{mog}{MOG}{multi-objective generation}
\newacronym{peft}{PEFT}{parameter efficient fine-tuning}
\newacronym{ehvi}{EHVI}{expected hypervolume improvement}
\newacronym{nehvi}{nEHVI}{noisy expected hypervolume improvement}
\newacronym{mgd}{MGD}{multiple-gradient descent}
\newacronym{agps}{A-GPS}{active generation of Pareto sets}
\newacronym{hvi}{HVI}{hypervolume improvement}
\newacronym{rasp}{RaSP}{rapid stability predictions}
\newacronym{phvi}{PHVI}{probability of hypervolume improvement}
\newacronym{pca}{PCA}{principal component analysis}
\newacronym{moea}{MOEA}{many-objective evolutionary algorithm}
\newcolumntype{R}[2]{%
    >{\adjustbox{angle=#1,lap=\width-(#2)}\bgroup}%
    l%
    <{\egroup}%
}
\newcommand*\rot{\multicolumn{1}{R{25}{1em}}}%
\newcommand{\cmark}{\ding{51}}
\newcommand{\xmark}{\ding{55}}
\renewcommand{\paragraph}{%
  \@startsection{paragraph}{4}%
  {\z@}{0ex \@plus 1ex \@minus .2ex}{-1em}%
  {\normalfont\normalsize\bfseries}%
}
\title{Amortized Active Generation of Pareto Sets}
\author{%
  Daniel M. Steinberg$^1$\thanks{corresponding author: \texttt{dan.steinberg@data61.csiro.au}} \quad
  Asiri Wijesinghe$^1$ \quad
  Rafael Oliveira$^1$ \\
  \textbf{Piotr Koniusz}$^{1,2,3}$ \quad
  \textbf{Cheng Soon Ong}$^{1,3}$ \quad
  \textbf{Edwin V. Bonilla}$^1$ \\
  $^1$CSIRO's Data61 \quad $^2$University of New South Wales \quad $^3$Australian National University \\
}
\begin{document}

\maketitle

\begin{abstract}
    We introduce \gls{agps}, a new framework for online discrete black-box
    \gls{moo}. \Gls{agps} learns a generative model of the Pareto set that
    supports a-posteriori conditioning on user preferences. The method employs
    a \gls{cpe} to predict non-dominance relations and to condition the generative
    model toward high-performing regions of the search space. We also show that this
    non-dominance \gls{cpe} implicitly estimates the \gls{phvi}. To incorporate subjective trade-offs,
    \gls{agps} introduces \emph{preference direction vectors} that encode
    user-specified preferences in objective space. At each iteration, the model
    is updated using both Pareto membership and alignment with these preference
    directions, producing an amortized generative model capable of sampling
    across the Pareto front without retraining. The result is a simple yet
    powerful approach that achieves high-quality Pareto set approximations,
    avoids explicit hypervolume computation, and flexibly captures user
    preferences. Empirical results on synthetic benchmarks and protein design
    tasks demonstrate strong sample efficiency and effective preference
    incorporation.
\end{abstract}

\section{Introduction}
\label{sec:intro}

In many scientific and engineering domains, practitioners face the challenge of
optimizing complex, high-dimensional, discrete objects under expensive
black-box evaluation processes. Examples include designing protein sequences
for enhanced stability and activity, synthesizing small molecules with tailored
pharmacokinetics, and engineering DNA constructs for precise gene regulation.
In these settings, each candidate design must be evaluated via computationally
intensive simulations or laboratory assays, making efficient search strategies
essential.
Furthermore, these applications frequently involve multiple, often conflicting
objectives. For instance, in protein engineering one may wish to maximize
thermal stability, catalytic turnover rate, and expression yield, yet
improvements in one property can degrade another. The set of non-dominated
trade-off designs (where no objective can be improved without sacrificing
performance in at least one other objective) is known as the Pareto set.
Accurately approximating this Pareto set is critical for enabling informed
decision-making in downstream experimental workflows.

Traditional \gls{bo} methods for black-box \acrfull{moo} problems, often
referred to as \gls{mobo}, rely on acquisition functions such as \gls{ehvi}
\citep{yang2019efficient, daulton2020differentiable,ament2023unexpected} or
their quasi-MonteCarlo extensions \citep{daulton2021parallel,
ament2023unexpected}, which involve complex numerical integration and/or scale
poorly with the number of objectives. Alternatively they may rely on random
scalarizations \citep{knowles2006parego, paria2020flexible, de2022mbore}
offering simplicity and scalability but rely on sufficient sampling density to
capture complex Pareto front geometries.
%
%

In this work we propose a fundamentally different approach, in the spirit of \gls{mog} \cite{yuan2024paretoflow,yao2024proud}, that directly estimates a generative model of the Pareto set in an online sequential, black-box optimization setting.
To this end, we build upon the recently proposed \gls{vsd} framework for single-objective optimization \citep{steinberg2025variational}. \gls{vsd} formulates
optimization as active learning of a generative model of
high-performing designs (active generation). Thus, \gls{vsd} alternates between
fitting a \acrfull{cpe} to discriminate favorable designs and updating a
conditional generative model to propose new candidates directly. Hence, instead of using a fitness-threshold \gls{pi}-\gls{cpe}, we propose a Pareto-set \gls{cpe}, which enables the generative model to focus exclusively on non-dominated designs. This approach bypasses explicit hypervolume computations and scalarizations and leads to a scalable online algorithm for Pareto set approximation that is different from other
recent \gls{mog} methods \citep{yuan2024paretoflow, yao2024proud}. Furthermore, we show that such a non-dominance \gls{cpe} is implicitly estimating \acrfull{phvi}.
Additionally, practical decision-making often requires incorporating user or
stakeholder preferences over the trade-offs among objectives. We introduce a
novel mechanism for embedding subjective preference conditioning into the
generative model, allowing practitioners to sample candidates that align with
specified trade-off directions. By using amortized \gls{vi} with \emph{preference direction vectors}, our method supports a-posteriori preference specification without retraining, offering flexibility in downstream design exploration.

We show that our method, \acrfull{agps}, performs well against competing
approaches on a suite of challenging synthetic and real multi-objective
optimization benchmarks.

\begin{figure}[t]
    \begin{minipage}[c]{0.49\textwidth}
    \includegraphics[width=\linewidth]{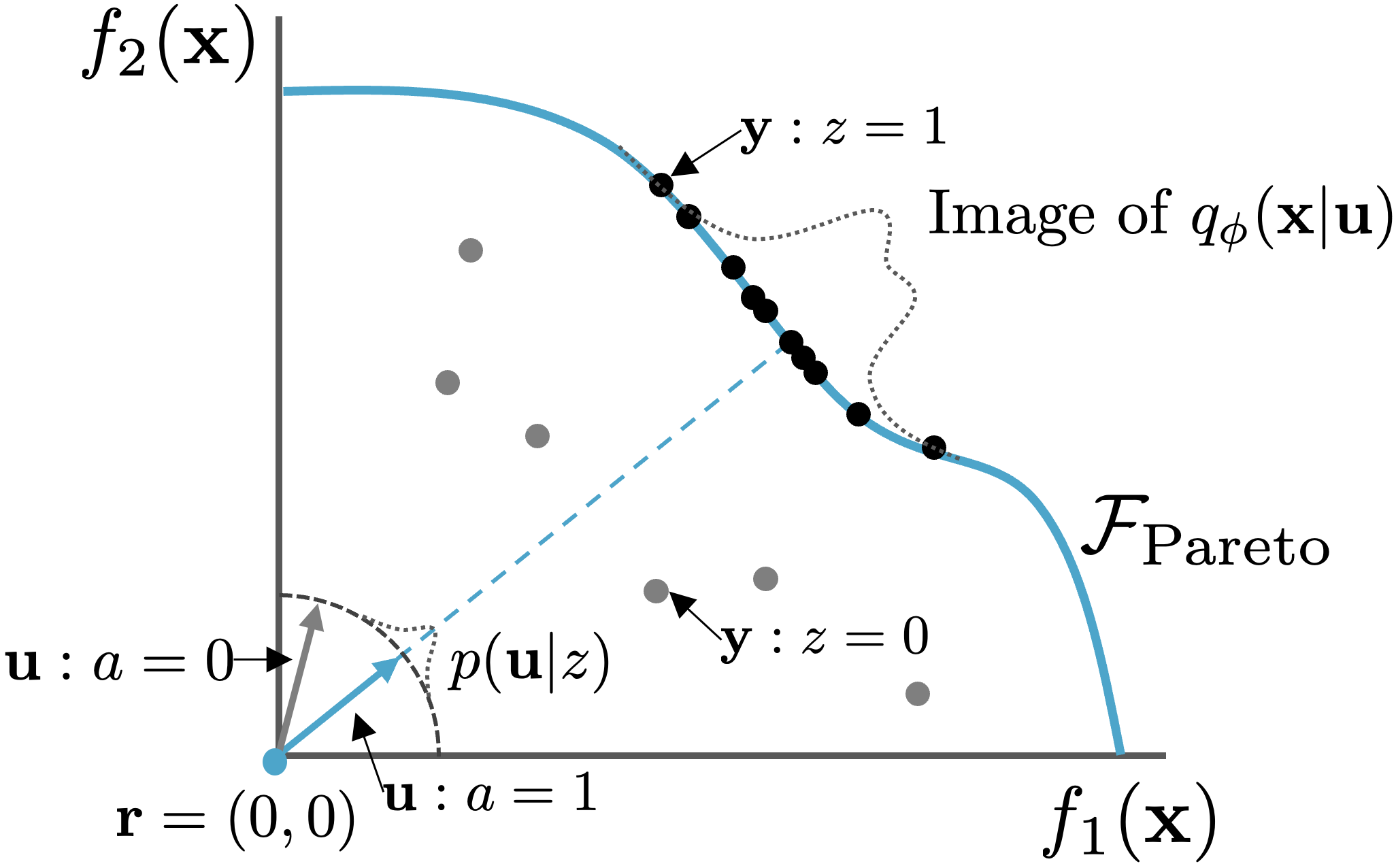}
    \end{minipage}\hfill
    \begin{minipage}[c]{0.49\textwidth}
    \caption{
        A visualization of a Pareto front, $\fpareto$, and the
        random variables used with A-GPS. $\tars$ are noisy realizations of
        the objectives, $\bbfs$. When $\lablf=1$ these observations lie on
        the Pareto front. Preference direction vectors, $\prefr$, are
        unit vectors pointing to a region of the Pareto front from a reference
        point, $\refp$, \autoref{eq:pref_train}. We derive aligned
        ($\lablu=1$) training preference direction vectors, $\prefr_n$, from
        observation pairs $(\tars_n, \obs_n)$, and mis-aligned preference
        direction vectors from permuting these pairs, $(\tars_{\rho(n)},
        \obs_n)$, \autoref{eq:cpe_pref}. The aim is to learn the
        distribution of the Pareto set $\qrobc{\qparam}{\obs}{\prefr} \approx
        \probc{\obs}{\prefr, \lablf=1, \lablu=1}$.
    }
    \label{fig:pareto_projection}
    \end{minipage}
    \vspace{-0.5cm}
\end{figure}

\section{Preliminaries}
\label{sec:method}


\subsection{Active generation}
\label{sec:activegen}

Active generation as implemented by \cite{steinberg2025variational} reframes
online black-box optimization as sequential learning of a conditional
generative model, guided by a \gls{cpe}. At each round, $t \in \{1, \dots, T\}$
we: (1) fit a \gls{cpe} (using some proper loss, $\mathcal{L}_\text{CPE}$),
\begin{equation}
    \cpe{\lablf}{\mparam}{\obs} \approx \probc{\lablf = 1}{\obs},
\end{equation}
parameterized by $\mparam$ and where $\labl = \indic{\obs \in \solnspace}$ indicates
membership in some desired set, $\solnspace$. For example, designs fitter than some
incumbent, $\obs^\star$, under some black box
function, $\bbf: \obsspace \to \real$; $\solnspace := \{\obs \in \obsspace : \bbf(\obs) > \bbf(\obs^\star) \}$.
Then (2) update the generative model
$\qrob{\qparam}{\obs}$, e.g.\ by minimizing the reverse \gls{kl} divergence to
the ideal conditional, $\probc{\obs}{\lablf=1}$, or equivalently maximizing the
\gls{elbo},
\begin{equation}
    \elbo{\qparam, \mparam} =
        \expec{\qrob{\qparam}{\obs}}{\log \cpe{\lablf}{\mparam}{\obs}}
        - \dkl{\qrob{\qparam}{\obs}}{\probc{\obs}{\data_0}}, \label{eq:vsdcpe}
\end{equation}
where $\probc{\obs}{\data_0}$ is a prior over the design space. Formally, using
data $\data^\lablf_N = \{(\obs_n, \lablf_n)\}^N_{n=1}$, active generation
optimizes,
\begin{equation}
    \mparam^*_t \leftarrow \argmin_\mparam \lcpe{\mparam, \data^\lablf_N} \qquad
    \qparam^*_t \leftarrow \argmax_\qparam \elbo{\qparam, \mparam^*_t},
    \label{eq:vsd_opt}%
\end{equation}%
then samples from $\qrob{\qparam^*_t}{\obs}$ are used to propose new candidates
for evaluation. New labels are acquired for these candidates, the dataset
is augmented, and the process is repeated until convergence. This solution to active generation is referred to as \gls{vsd} \cite{steinberg2025variational}. Under certain assumptions
on the form of the models, this procedure has proven convergence rates to
the ideal $\probc{\obs}{\lablf=1}$.

\subsection{Optimizing over multiple objectives}
\label{sec:moo}

In this work we are concerned with generating discrete or mixed discrete-continuous designs,
for example sequences $\obs \in \obsspace = \acidspace^M$, where $\acidspace$ is
the sequence vocabulary and $M$ is the sequence length, that have particular
measurable properties $\tars \in \real^L$.
We assume the `black-box'
relationship $\tars = \bbfs(\obs) + \errs$ where $\bbfs(\obs) = [\bbf^1(\obs),
\ldots, \bbf^l(\obs), \ldots, \bbf^L(\obs)]$ and $\expec{\prob{\errs}}{\errs} =
\mathbf{0}$.
The black-box function $\bbfs(\cdot)$ could be a noisy empirical observation, or an expensive
physics/chemistry simulation (where $\errs=\mathbf{0}$), etc.
In \gls{moo} we would like to find the global optimum,
\begin{align}
    \max_{\obs \in \obsspace} \bbfs(\obs).
    \label{eq:moo}
\end{align}
There are a number of issues that present themselves here though.
Firstly, we cannot use gradient based optimization methods directly since we cannot access $\nabla_\obs \bbf^l(\obs)$ as $\bbf^l$ are black-boxes and $\obs$ is (partially) discrete. But more importantly, $\max$ is not uniquely defined for the
vector valued $\bbfs$ as the individual objectives can be in conflict with one (or more formally, there is no total ordering). Instead, we are interested in finding the set of designs for which we cannot increase one objective without compromising
others. This is known as the Pareto set $\pareto^* \subset \obsspace$, 
\begin{align}
    \pareto^* := \{ \obs : \obs' \not\succ \obs, ~ \forall \obs' \in \obsspace \},
    \label{eq:pareto}
\end{align}
where $\obs' \succ \obs$ refers to $\obs'$ dominating $\obs$, i.e., all of the objective function values for $\obs'$ are greater than or equal to those of $\obs$, and at least one is greater,
\begin{align}
    \obs' \succ \obs ~ \text{iff} ~
    \bbf^l(\obs') \geq \bbf^l(\obs) ~~ \forall l \in \{1, \ldots, L\}~\text{and}~
    \exists l \in \{1, \ldots, L\}~\text{such that}~\bbf^l(\obs') > \bbf^l(\obs).
    \label{eq:dominate}
\end{align}
The Pareto set also induces the Pareto front, $\fpareto^* := \{\bbfs(\obs) :
\forall\obs \in \pareto^* \}$, which is the image of the Pareto set outcomes in
$\real^L$.
For data $\data_N = \{(\tars_n, \obs_n)\}^N_{n=1}$ at round $t$ we define
the current \emph{observable} Pareto set,
\begin{align}
    \pareto^t &:= \{ \obs_i : \obs_j \not\succ \obs_i,
        ~ \forall i \in \{1, \ldots, N \},
        ~ \forall j \in \{1, \ldots, N \} \backslash i
    \}, \label{eq:apareto} \\
    &\text{where} \quad \obs_j \succ \obs_i ~ \textrm{iff} ~
    \tar^l_j \geq \tar^l_i ~~ \forall l \in \{1, \ldots, L\}
    ~\text{and}~
    \exists l \in \{1, \ldots, L\}~\text{such that}~\tar_j^l> \tar_i^l.
    \nonumber
\end{align}
We define a Pareto set membership (or non-dominance) label $\lablf_n :=
\indic{\obs_n \in \pareto^t}$, where $\indic{\cdot}: \{\text{True},
\text{False}\} \to \{1, 0\}$, which we will use for training the \gls{cpe} in
active generation. This definition can be extended to the whole domain $\obsspace$ as a labeling function $\lablf(\obs) := \indic{\obs \in \paretofn{\pareto^t \cup \{\obs\}}}$, where $\paretofn{\solnspace}$ denotes the Pareto subset of an arbitrary set $\solnspace\subset\obsspace$. Note that the definition remains unchanged for observed points already in the dataset, i.e., $\lablf(\obs_n) = \lablf_n$. As it turns out, this non-dominance \gls{cpe} is also
estimating the probability of hypervolume improvement (\gls{phvi}) for new query points as the following theorem and corollary  (assuming $\errs = \mathbf{0})$. We fully define \gls{hvi} in \autoref{app:proof}, as its definition is a little involved.
\begin{restatable}[Equivalence of Indicators]{theorem}{mainthm}\label{thm:indicator}
    For every $\obs \notin \pareto^t$, the \gls{hvi} indicator is equivalent to a
    non-dominance indicator,
    \begin{equation}
        \indic{\hvi{\obs} > 0} = \lablf(\obs).
    \end{equation}%
\end{restatable}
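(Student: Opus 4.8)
The plan is to work entirely in objective space $\real^L$ with the hypervolume taken relative to the reference point $\refp$ of \autoref{app:proof}, and to reduce the indicator equality to a statement about set containment. Write $\mathcal{A}(\solnspace) := \{ \mathbf{z} \in \real^L : \refp \preceq \mathbf{z} \preceq \bbfs(\obs')\ \text{for some}\ \obs' \in \solnspace \}$ for the region dominated by a set $\solnspace$ and lying above the reference, so that $\hv{\solnspace} = \lambda(\mathcal{A}(\solnspace))$ with $\lambda$ the Lebesgue measure. Since the attainment region of a union is the union of attainment regions, and since it is monotone under inclusion, $\mathcal{A}(\pareto^t) \subseteq \mathcal{A}(\pareto^t \cup \{\obs\})$ and
\[
  \hvi{\obs} = \lambda\!\big( \mathcal{A}(\pareto^t \cup \{\obs\}) \setminus \mathcal{A}(\pareto^t) \big) = \lambda\!\big( \mathcal{A}(\{\obs\}) \setminus \mathcal{A}(\pareto^t) \big).
\]
Hence $\hvi{\obs} > 0$ holds exactly when the box $\mathcal{A}(\{\obs\})$ is not contained, up to a null set, in the already-dominated region $\mathcal{A}(\pareto^t)$, and the whole theorem reduces to showing that this containment fails precisely when $\obs$ is non-dominated by $\pareto^t$.

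For the direction $\lablf(\obs) = 0 \Rightarrow \hvi{\obs} = 0$, I would argue by contraposition on domination. If $\obs \notin \paretofn{\pareto^t \cup \{\obs\}}$ then, because $\obs \notin \pareto^t$ and $\obs$ cannot dominate itself, the dominating element must lie in $\pareto^t$: there is some $\obs' \in \pareto^t$ with $\bbfs(\obs') \succeq \bbfs(\obs)$. Monotonicity of the box then gives $\mathcal{A}(\{\obs\}) \subseteq \mathcal{A}(\{\obs'\}) \subseteq \mathcal{A}(\pareto^t)$, so the set difference above is empty and $\hvi{\obs} = 0$.

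For the converse $\lablf(\obs) = 1 \Rightarrow \hvi{\obs} > 0$, I would exhibit an explicit positive-measure sliver of $\mathcal{A}(\{\obs\})$ lying outside $\mathcal{A}(\pareto^t)$. Since $\obs$ is non-dominated, every $\obs' \in \pareto^t$ admits a coordinate $l$ with $\bbf^l(\obs') < \bbf^l(\obs)$. Choosing $\epsilon$ smaller than the minimum of these coordinate gaps over the finite set $\pareto^t$, and also smaller than each distance $\bbf^l(\obs) - r^l$ from $\bbfs(\obs)$ to the reference, the corner box $R_\epsilon := \prod_{l=1}^L (\bbf^l(\obs) - \epsilon,\, \bbf^l(\obs)]$ is dominated by $\bbfs(\obs)$ yet by no element of $\pareto^t$ and sits above $\refp$. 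Thus $R_\epsilon \subseteq \mathcal{A}(\{\obs\}) \setminus \mathcal{A}(\pareto^t)$ with $\lambda(R_\epsilon) = \epsilon^L > 0$, giving $\hvi{\obs} > 0$. The two implications together yield $\indic{\hvi{\obs} > 0} = \lablf(\obs)$.

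The hard part will be the boundary cases in this last step, which is where I expect the genericity assumptions of \autoref{app:proof} to be required. The coordinate-gap argument breaks if some $\obs' \in \pareto^t$ ties $\obs$ in \emph{all} objectives, since a tie $\bbfs(\obs') = \bbfs(\obs)$ collapses the corner box and forces $\hvi{\obs} = 0$ even though $\obs$ remains weakly non-dominated; it also requires $\bbfs(\obs)$ to strictly dominate $\refp$ so that $R_\epsilon$ retains nonzero volume above the reference. I would therefore state explicitly that objective vectors are distinct (which holds almost surely for the noisy observations $\tars$ depicted in \autoref{fig:pareto_projection}) and that $\refp$ is chosen dominated by the front, and I would note that the finiteness of $\pareto^t$ is what makes the minimum coordinate gap strictly positive so that a valid $\epsilon$ exists. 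Under these conditions the equivalence is exact.
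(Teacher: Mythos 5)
Your proof is correct and follows essentially the same route as the paper's: the dominated case gives an unchanged dominated region and hence zero improvement, while the non-dominated case exhibits a corner box of side length bounded by the minimum coordinate gap and the distance to the reference point --- your $R_\epsilon$ is exactly the paper's $\mathrm B_\delta$ with $\delta = \min\{\gamma/2, \min_i(\tar_i - r_i)/2\}$. Your explicit flagging of the all-coordinates-tie degeneracy is, if anything, slightly more careful than the paper, which asserts $\gamma > 0$ ``due to dominance'' without explicitly excluding exact ties in objective space.
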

\begin{restatable}[Non-Dominance \gls{cpe} estimates \gls{phvi}]{corollary}{phvicor}
Following straightforwardly from \autoref{thm:indicator},
\begin{equation}
    \mathbb{P}(\lablf(\obs) = 1 | \obs)
    = \mathbb{P}(\hvi{\obs} > 0 | \obs) \eqqcolon \mathrm{PHVI}(\obs), \quad \forall \obs \notin \pareto^t,
\end{equation}%
as the events are equivalent. Thus, a \gls{cpe} trained on $\lablf$,
using a proper loss, is predicting \gls{phvi}.\qed%
\end{restatable}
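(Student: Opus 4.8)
The plan is to derive the corollary in two moves from \autoref{thm:indicator}: first obtain the stated equality of conditional probabilities as a measure-theoretic consequence of the indicator identity, and then use the properness of $\mathcal{L}_\text{CPE}$ to conclude that the fitted \gls{cpe} targets exactly this probability. Throughout I fix a query $\obs \notin \pareto^t$ and work in the model of \autoref{sec:moo}, where the outcome $\tars = \bbfs(\obs) + \errs$ is random given $\obs$. Both $\lablf(\obs)$ and $\hvi{\obs}$ are then measurable functions of $\tars$ on a common probability space, and ``conditional on $\obs$'' means integrating over the noise $\errs$ with the observable Pareto set $\pareto^t$ held fixed.

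For the first claim I would emphasise that \autoref{thm:indicator} is a \emph{pointwise} identity: for every realization of $\tars$ it gives $\indic{\hvi{\obs} > 0} = \lablf(\obs)$. Hence the events $\{\hvi{\obs} > 0\}$ and $\{\lablf(\obs) = 1\}$ are identical as measurable subsets of the sample space, not merely equal in distribution, and equal events carry equal probability under any measure. Taking the conditional expectation of both sides of the indicator identity given $\obs$ then yields
\begin{equation}
    \mathbb{P}(\lablf(\obs) = 1 \mid \obs)
    = \mathbb{E}\!\left[\indic{\hvi{\obs} > 0} \mid \obs\right]
    = \mathbb{P}(\hvi{\obs} > 0 \mid \obs)
    =: \mathrm{PHVI}(\obs).
\end{equation}
This step is bookkeeping once the theorem is granted; the only care needed is to verify that the identity holds per-sample, so that the two events coincide as sets.

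For the second claim I would recall the defining property of a proper loss: for a binary label and a prediction $c \in [0,1]$, the conditional risk $\mathbb{E}[\ell(c, \lablf(\obs)) \mid \obs]$ is minimized --- uniquely, for a strictly proper $\ell$ --- at $c = \mathbb{P}(\lablf(\obs) = 1 \mid \obs)$, where $\ell$ denotes the per-example proper loss whose empirical average over $\data^\lablf_N$ is $\lcpe{\mparam, \data^\lablf_N}$. Writing the population objective as $\mathbb{E}_{\prob{\obs}}\big[\mathbb{E}[\ell(\cpe{\lablf}{\mparam}{\obs}, \lablf(\obs)) \mid \obs]\big]$, the outer expectation over $\obs$ separates the minimization pointwise, so for a model class expressive enough to realize the Bayes predictor the minimizer satisfies $\cpe{\lablf}{\mparam^*}{\obs} = \mathbb{P}(\lablf(\obs) = 1 \mid \obs)$. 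Combined with the first claim this equals $\mathrm{PHVI}(\obs)$ for all $\obs \notin \pareto^t$, establishing that the optimally trained non-dominance \gls{cpe} predicts \gls{phvi}.

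The conceptually trivial first part hides the only real subtlety, namely pinning down the probability space: I must make explicit that the randomness being conditioned over is the observation noise $\errs$ (so that $\lablf(\obs)$ is a genuine Bernoulli variable and $\mathrm{PHVI}$ is a non-degenerate probability) and that $\pareto^t$ is fixed; conflating the noise with randomness in the dataset would leave the two events equal only in distribution rather than as sets, breaking the argument. The substantive caveats live in the second part: the identity $\cpe{\lablf}{\mparam^*}{\obs} = \mathrm{PHVI}(\obs)$ holds only in the idealized regime of a strictly proper loss, a realizable and sufficiently expressive model class, and the population (infinite-data) limit, so with finite data and bounded capacity the \gls{cpe} merely approximates \gls{phvi}. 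Finally, since the indicator equivalence is asserted only for $\obs \notin \pareto^t$, both the target and the conclusion are well-defined precisely off the current observable Pareto set.
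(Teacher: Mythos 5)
Your proposal is correct and follows essentially the same route as the paper, which likewise treats the corollary as immediate: the pointwise indicator identity of \autoref{thm:indicator} makes the events $\{\lablf(\obs)=1\}$ and $\{\hvi{\obs}>0\}$ coincide (with randomness coming from the noise $\errs$ and $\pareto^t$ held fixed), so their conditional probabilities agree, and a (strictly) proper loss then makes the population-optimal \gls{cpe} the conditional probability $\mathbb{P}(\lablf(\obs)=1\mid\obs)=\mathrm{PHVI}(\obs)$. Your added bookkeeping --- the per-sample (not merely distributional) event identity and the realizability/infinite-data caveats on the properness step --- is consistent with, and a careful elaboration of, what the paper leaves implicit.
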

See \autoref{app:proof} for the proof and assumptions under which this is
true. We note that, for existing $\obs \in \pareto^t$, we have a discrepancy, as $\lablf(\obs) = 1$, whereas $\hvi{\obs} = 0$. 
However, when the objectives are continuous, constructing the indicators from an existing dataset for training a \gls{cpe} is almost surely equivalent to constructing them in a held-out sense as there will be no ties. Furthermore, if there is observation noise it is beneficial to allow sampling at $\obs$ again for noise reduction (e.g.\ averaging), as the true $\bbfs(\obs)$ may be dominated even if a single observation $\tars$ at $\obs$ suggests otherwise.

\section{Incorporating User Preferences}
\label{sec:prefdirs}

In multi-objective optimization (MOO),  practitioners invoke \emph{subjective
preferences} to single out a subset of designs to meet application-specific requirements. Ideally, we would like not only to incorporate these subjective preferences but also to avoid retraining our active generation framework every time a new preference is given.

A standard approach to incorporating subjective preferences is
\emph{scalarization}: e.g.\ for convex scalarization we specify a weight vector
$\pweights\in\mathbb{R}^L$ with $\|\pweights\|_1=1$ and maximize
\begin{equation}
    \argmax_\obs s_{\pweights, \refp}(\obs), \quad \text{where} \quad s_{\pweights, \refp}(\obs) = \pweights^\top(\bbfs(\obs) - \refp)
\end{equation}
where $\refp \in \mathbb{R}^L$ is a reference point \citep{knowles2006parego,
zhang2007moea, zhang2007moea, paria2020flexible}. While scalarization blends
objectives according to explicit trade-off weights, it is not optimal when
learning a conditional generative model of the Pareto set via a \gls{cpe} on labels $\lablf_n = \indic{s_{\pweights, \refp}(\obs_n) > \thresh}$ for some threshold $\thresh \in \real$, as in \cite{de2022mbore}, since each new $\boldsymbol\lambda$ would require retraining. Furthermore, thresholding weighted objectives can blur the
non-dominance boundary compared to directly labeling it.

\subsection{Preference direction vectors and alignment indicators}
As we will see in \autoref{sec:vsd_moo}, our solution to incorporating user preferences for active generation is based on amortization. In other words, instead of estimating a model $\qrob{\qparam}{\obs}$ as in \gls{vsd}, we will learn a conditional model of the form $\qrobc{\qparam}{\obs}{\prefr}$. Consequently, instead of scalarization, we introduce
\emph{preference direction vectors} $\prefr \in \prefrspace$ where $\prefrspace
= \{\prefr \in \real^L : \|\prefr\|_2 = 1\}$, defined from observed or desired (subjective) user specified outcomes. In our experiments, we train our method using
\begin{equation}
    \prefr_n = g( \tars_n) := \frac{\tars_n- \refp}{\| \tars_n - \refp \|_2}.
    \label{eq:pref_train}
\end{equation}
These unit vectors capture the relative emphasis among objectives in a single
geometric object. Given a trained model, a user can specify their own
preferences via $\prefr_{\star} = g(\tars_{\star})$ and our approach will
generate solutions from $\qrobc{\qparam}{\obs}{\prefr_{\star}}$. Importantly,
our generative model needs to enforce that generated samples respect a user's
desired trade-off. Therefore, we define an \emph{alignment indicator}, $\lablu
\in \{0, 1\}$, that labels each $(\obs, \prefr)$ pair as `aligned' if it
achieves correct projection onto the preference direction. We will make clear the need
for this indicator variable in the next section.

Preference directions generalize (convex) scalarization weights to a
(non-convex) generative setting: any $\boldsymbol\lambda$ can be mapped to a
unit-norm vector $\mathbf{u} = \boldsymbol\lambda / \|\boldsymbol\lambda\|_2$,
and conversely each $\mathbf{u}$ induces a unique normalized weight. 
In fact \citep{cheng2016reference, chugh2018surrogate} use ``reference vectors'', which are similar to our $\prefr$, for scalarization of objectives and to promote diversity for \glspl{moea}. 
By conditioning on $(\prefr, \lablu)$ rather than scalarizing by $\pweights$, our
generative Pareto-set model becomes both more flexible (no retraining for new
trade-offs) and more faithful to non-dominance structure. We visualize these
preference direction vectors in \autoref{fig:pareto_projection}.

\section{Amortized Active Generation of Pareto Sets}
\label{sec:vsd_moo}

We now have all the components to describe our amortized active generation framework that learns to generate (approximate) solutions in the Pareto set, conditioned on user preferences. We call our method \acrfull{agps}, and it begins by generalizing the active generation objective in \cite{steinberg2025variational}. That is, for each round, $t$, we minimize the reverse \gls{kl} divergence
between the generative model $\qrobc{\qparam}{\obs}{\prefr}$
and an underlying (unobserved) true model $\probc{\obs}{\prefr, \lablf, \lablu}$,
\begin{align}
    \qparam^*_t &= \argmin_\qparam \dkl{
        \qrobc{\qparam}{\obs}{\prefr}\probc{\prefr}{\lablf}
    }{
        \probc{\obs}{\prefr, \lablf, \lablu}\probc{\prefr}{\lablf}
    }, \nonumber \\
    &= \argmin_\qparam \expec{\probc{\prefr}{\lablf}}{\dkl{
        \qrobc{\qparam}{\obs}{\prefr}
    }{
        \probc{\obs}{\prefr, \lablf, \lablu}
    }}.
    \label{eq:ekl_moo}
\end{align}
Here we are actually conditioning on $z=1$ and $a=1$, however we leave this implicit henceforth to avoid notational clutter. The inclusion of $\probc{\prefr}{\lablf}$ rewards learning an \emph{amortized}
generative model, $\qrobc{\qparam}{\obs}{\prefr}$, over the distribution of the
relevant preference directions. Naturally we cannot evaluate
$\probc{\obs}{\prefr, \lablf, \lablu}$ directly, and so we appeal to Bayes'
rule,
\begin{equation}
    \probc{\obs}{\prefr, \lablf, \lablu} = \frac{1}{Z}
        \probc{\lablf}{\obs, \prefr}
        \probc{\lablu}{\obs, \prefr}
        \probc{\obs}{\prefr}.
\end{equation}
Here we have assumed conditional independence between $\lablf$ and $\lablu$ given $\obs$ and $\prefr$,
and since
$Z = \probc{\lablf,\lablu}{\prefr}$
is a constant w.r.t.\ $\obs$,  we
will omit it from our objective. We make a further simplifying assumption that
a-priori $\probc{\obs}{\prefr} = \probc{\obs}{\data_0}$, and then we rely on
the likelihood guidance terms, $\probc{\lablf}{\obs, \prefr}
\probc{\lablu}{\obs, \prefr}$, to capture the joint relationship between
$(\obs, \prefr)$ in the variational posterior. We justify this decision by
noting that the \emph{alignment} relationship, $\obs | \prefr$, may be difficult to reason about
a-priori, and requiring such a prior would then preclude the use of pre-trained
models for $\probc{\obs}{\data_0}$. Putting this all together results  in the
following equivalent amortized \gls{elbo} objective,
\begin{align}
    \qparam^*_t &= \argmax_\qparam \aelbo{\qparam} \quad \text{where}, \\
    \aelbo{\qparam} &= \expec{\probc{\prefr}{\lablf}}{\elbo{\qparam}}, \nonumber \\
    &= \expece_{\!\!\!\!\!\!
            \underbrace{\scriptstyle\probc{\prefr}{\lablf}}_{\scriptstyle\text{Direction dist.}}
        \!\!\!\!\!\!}\!\big[
            \expece_{\qrobc{\qparam}{\obs}{\prefr}}\!
        \big[\!%
                \log \underbrace{\probc{\lablf}{\obs, \prefr}}_{\text{Pareto CPE}} +
                \log \underbrace{\probc{\lablu}{\obs, \prefr}}_{\text{Align. CPE}}
        - \beta \dkl{\qrobc{\qparam}{\obs}{\prefr}}{\probc{\obs}{\data_0}}\!
        \big]\!
    \big].
    \label{eq:aelbo}
\end{align}
The $\beta$ coefficient appears here to control the objective's
exploration-exploitation tradeoff, and where $\beta=1$ results in the exact
minimization of \autoref{eq:ekl_moo}. We will now discuss how we estimate each
of these components in turn, leading to the \gls{agps} algorithm presented in
\autoref{alg:optloop}.
\subsection{Estimating \gls{agps}'s component distributions}
\paragraph{Preference direction distribution, $\probc{\prefr}{\lablf}$.}
Since we observe $\prefr_n$, we can approximate empirically
$
    \probc{\prefr}{\lablf} \approx
    (\sum_{n=1}^N \lablf_n)^{-1}
    \sum_{n=1}^N \lablf_n \, \indic{\prefr = \prefr_n}
$.
Alternatively, we can use maximum likelihood to
learn a parameterized estimator $\qrob{\pparam}{\prefr} \approx
\probc{\prefr}{\lablf}$, with data $\data_N^\lablf = \{(\obs_n, \prefr_n,
\lablf_n)\}^N_{n=1}$,
\begin{equation}
    \pparam^*_t =\argmin_\pparam \lpref{\pparam, \data_N^\labl},
    \quad \text{where} \quad
    \lpref{\pparam, \data^\lablf_N} =
        - \frac{1}{\sum_{n=1}^N \lablf_n}
        \sum\nolimits^N_{n=1}
        \lablf_n \log \qrob{\pparam}{\prefr_n}.
    \label{eq:ml_pref}
\end{equation}
We find this occasionally aids exploration.
Examples of appropriate parametric forms are von Mises-Fisher distributions,
power spherical distributions \citep{de2020power} or normalizing
flows~\citep{rezende2020normalizing}. We find Normal distributions, or
mixtures, normalized to the unit sphere are more numerically stable than some
of the specialized spherical distributions, see \autoref{app:pref_dist} for
more detail. We also find that fitting an unconditional $\qrob{\pparam}{\prefr}
\approx \prob{\prefr}$ for just the initial round can aid exploration of
the Pareto front.

\paragraph{Pareto \gls{cpe}, $\probc{\lablf}{\obs, \prefr}$.}
As per the original \gls{vsd}, we define a \gls{cpe} to directly discriminate
over the solution set, $\solnspace$. Sometimes we find setting $\solnspace =
\pareto^t$ leads to overly exploitative behavior. Instead, we anneal the set
using the Pareto ranking method described in \cite{deb2002fast}, and define the
labels based on a thresholded rank, $k$: $\lablf_n = \indic{\obs_n \in
\{\bigcup_k \pareto^t{_{,k}} : \forall k \leq \thresh_\round\}}$. Here $k=1$
indicates the Pareto set, $k=2$ the next non-dominated set once $\pareto^t{_{,1}}$
is removed, etc. We use $\thresh_\round = \threshfn{\{\tars: \tars \in
\data_N\}, \threshparam_\round}$ as presented in
\cite{steinberg2025variational}, ensuring $\thresh_T$ labels just $\pareto^t$.
The annealed method (Equation 20) applied to quantiles of $-k$ is particularly
effective. So, with this label, we use the log-loss to train a \gls{cpe},
\begin{equation}
    \lcpef{\mparam, \data_N^\labl} = - \frac{1}{N} \sum\nolimits_{n=1}^N
    \lablf_n \log \cpe{\lablf}{\mparam}{\obs_n, \prefr_n}
    + (1-\lablf_n) \log (1 - \cpe{\lablf}{\mparam}{\obs_n, \prefr_n}),
    \label{eq:cpe_pareto}
\end{equation}
where $\cpe{\lablf}{\mparam}{\obs, \prefr}$ is a discriminative model parameterized by
$\mparam$, e.g.~a neural network.

\paragraph{Preference alignment \gls{cpe}, $\probc{\lablu}{\obs, \prefr}$.}
Since we do not wish to rely on a strong prior, $\probc{\obs}{\prefr}$, for
our sole-source of preference alignment information, we instead explicitly
reward alignment in our conditional generative model by using a \gls{cpe}
guide. We create contrastive data for training this guide, $\data^\lablu_N =
\{(\lablu_n\! =\! 1, \obs_n, \prefr_n)\}^{N}_{n=1} \cup \bigcup^P_{i=1}
\{(\lablu_n\! = \! 0, \obs_n, \prefr_{\rho_i(n)}\}^{N}_{n=1}$ where the
second set are purposefully
misaligned by permutations, $\rho_i : \natno \to \natno$. This results in
the log-loss,
\begin{equation}
    \lcpeu{\uparam, \data_N^\lablu} =
    - \frac{1}{N+PN} \left[
        \sum_{n=1}^{N} \log \cpe{\lablu}{\uparam}{\obs_n, \prefr_n}
    + \sum_{i=1}^P \sum_{n=1}^{N}
        \log (1 - \cpe{\lablu}{\uparam}{\obs_n, \prefr_{\rho_i(n)}})%
    \right],
    \label{eq:cpe_pref}
\end{equation}
where $\cpe{\lablu}{\uparam}{\obs, \prefr}$ is our \gls{cpe} parameterized by
$\uparam$. We make use of two permutation methods for creating the contrastive data. The
first is to just use random permutation without allowing any random alignments.
The second is to use the top-$k$ nearest neighbors, based on $\prefr_n$ cosine
distance, for $k$ replicates. The random permutation contrastive data covers the
space of misalignment, while the top-$k$ permutations improves the angular
precision of the alignment scoring \gls{cpe}. For all experiments we use 7
random permutation replicates, and 2 top-2 replicates, for a total of $P=9$.

\subsection{Learning \gls{agps}'s variational distribution}
To learn $\qrobc{\qparam}{\obs}{\prefr}$,  we can now re-write our amortized \gls{elbo}, \autoref{eq:aelbo}, in terms of
these estimated quantities,
\begin{equation}
    \aelbo{\qparam, \mparam, \uparam, \pparam} =
        \expec{\qrob{\pparam}{\prefr}}{
            \expec{\qrobc{\qparam}{\obs}{\prefr}}{
                \log \cpe{\lablf}{\mparam}{\obs, \prefr} +
                \log \cpe{\lablu}{\uparam}{\obs, \prefr}
            }
        - \beta \dkl{\qrobc{\qparam}{\obs}{\prefr}}{\probc{\obs}{\data_0}}
    }.
    \label{eq:agps}
\end{equation}
We find that using `on-policy' gradient estimation methods such as REINFORCE
\cite{williams1992simple, mohamed2020monte} are very slow when we have complex
variational distribution forms, $\qrobc{\qparam}{\obs}{\prefr}$, e.g.\ causal
transformers. This is because we have to set a low learning rate to avoid the
variance of this estimator inducing exploding gradients for long sequences. Also,
new samples have to be drawn from the variational distribution every
iteration of \gls{sgd}, which can be computationally expensive.
So instead we use an `off-policy' gradient estimator
with importance weights to emulate the on-policy estimator,
%
%
%
%
\begin{align}
    &\nabla_\qparam \aelbo{\qparam, \mparam, \uparam, \pparam} \nonumber \\
        &~~ = \expec{\qrobc{\qparam'}{\obs}{\prefr} \qrob{\gamma}{\prefr}}{
            w(\obs, \prefr) \cdot \!
            \left(\log \cpe{\lablf}{\mparam}{\obs, \prefr} + \log \cpe{\lablu}{\uparam}{\obs, \prefr}
            - \beta \log \frac{\qrobc{\qparam}{\obs}{\prefr}}{\probc{\obs}{\data_0}}
            \right) \nabla_\qparam \log \qrobc{\qparam}{\obs}{\prefr}}\!.
    \label{eq:off-pol-grads}
\end{align}
Here $w(\obs, \prefr) = \qrobc{\qparam}{\obs}{\prefr} /
\qrobc{\qparam'}{\obs}{\prefr}$ are the importance weights
\citep{precup2000eligibility, burda2016importance}. Now we use $S$ samples from
$\obs^{(s)} \sim \qrobc{\qparam'}{\obs}{\prefr^{(s)}}$, to approximate the
expectation in \autoref{eq:off-pol-grads}. If we choose
$\qparam' = \qparam$ we recover on-policy gradients, however we typically
only update $\qparam'$ every 100 iterations of optimising A-\gls{elbo}, or if
the effective sample size drops below a predetermined threshold
($0.33S$). Whenever we update $\qparam'$ we also resample $S$ samples.
We use these estimated gradients with an appropriate \gls{sgd} algorithm, such
as Adam~\cite{kingma2014adam}, to optimize for $\qparam^*_t$.

\subsection{Generating Pareto set candidates for evaluation}

To recommend candidates for black-box evaluation in round $t$, we sample a set of $B$ designs from our
search distribution,
\begin{align}
    \{\obs_{bt}\}^B_{b=1} \sim \prod^B_{b=1} \qrobc{\qparam_t^*}{\obs}{\prefr_{bt}},
    \quad \textrm{where} \quad
    \qparam_t^* = \argmax_\qparam \aelbo{\qparam, \mparam^*_t, \uparam^*_t, \pparam^*_t}.
\end{align}
We are free to choose $\prefr_{bt}$ based on preferences
($\prefr_\star$); or if we do not have specific preferences to incorporate into
the query, we sample $\{\prefr_{bt}\}^B_{b=1} \sim \prod^B_{b=1}
\qrob{\pparam^*_t}{\prefr}$ for broad Pareto front exploration.
%
\begin{algorithm}[t]
    \small
    \caption{
        \gls{agps} optimization loop. See \autoref{fig:agps-loop} for a visual
        representation.
    }
    \label{alg:optloop}
    \begin{algorithmic}[1]
        \Require{
            Initial dataset $\data_N$,
            black-box $\bbfs$,
            prior $\probc{\obs}{\data_0}$,
            \gls{cpe}s $\cpe{\lablf}{\mparam}{\obs, \prefr}$ and $\cpe{\lablu}{\uparam}{\obs, \prefr}$,
            variational families $\qrob{\pparam}{\prefr}$ and $\qrobc{\qparam}{\obs}{\prefr}$,
            threshold function $f_\thresh$ and $\threshparam_1$,
            budget $T$ and $B$.
        }
        \Function{FitModels}{$\data_N, \thresh$}
            \Let{$\data^\lablf_N$}{$\{(\lablf_n, \obs_n, \prefr_n)\}^N_{n=1}$, ~
                where
                $\lablf_n = \indic{\obs_n \in \{\bigcup_k\pareto^t{_{,k}} : \forall k \leq \thresh\}}$,
                $\prefr_n = (\tars_n - \refp) / \| \tars_n - \refp \|$}
            \Let{$\data^\lablu_N$}{
                $\{(\lablu_n \!=\! 1, \obs_n, \prefr_n)\}^N_{n=1} \cup
                \bigcup_{i=1}^P \{(\lablu_n \! = \! 0, \obs_n, \prefr_{\rho_i(n)})\}^N_{n=1}$
            }
            \Let{$\pparam^*$}{$\argmin_\pparam \lpref{\pparam, \data^\lablf_N}$}
            \Let{$\mparam^*$}{$\argmin_\mparam \lcpef{\mparam, \data^\lablf_N}$}
            \Let{$\uparam^*$}{$\argmin_\uparam \lcpeu{\uparam, \data^\lablu_N}$}
            \Let{$\qparam^*$}{$\argmax_\qparam \aelbo{\qparam, \mparam^*, \uparam^*, \pparam^*}$}
            \State \Return{$\qparam^*, \mparam^*, \uparam^*, \pparam^*$}
        \EndFunction
        \For{round $t \in \{1, \ldots, T\}$}
            \Let{$\thresh_\round$}{$\threshfn{\{\tars: \tars \in \data_N\}, \threshparam_t}$}
            \Let{$\qparam_t^*, \mparam_t^*, \uparam^*_t, \pparam_t^*$}{\Call{FitModels}{$\data_N, \thresh_\round$}}
            \Let{$\{\prefr_{bt}\}^B_{b=1}$}{sample $\qrob{\pparam_t^*}{\prefr}$ or use $\prefr_\star$}
            \Let{$\{\obs_{bt}\}^B_{b=1}$}{sample $\qrobc{\qparam_t^*}{\obs}{\prefr_{bt}} ~~ \forall b \in \{1, \dots, B\}$}
            \Let{$\{\tars_{bt}\}^B_{b=1}$}{$\{\bbfs(\obs_{bt}) + \errs_{bt}\}^B_{b=1}$}
            \Let{$\data_{N}$}{$\data_N \cup \{(\obs_{bt}, \tars_{bt})\}^B_{b=1}$}
        \EndFor
        \State \Return{$\data_N, \qparam^*_T, \mparam^*_T, \uparam^*_T, \pparam^*_T$}
    \end{algorithmic}
\end{algorithm}
%
%
\section{Related Work}
\label{sec:related}

Our work sits at the intersection of online black-box optimization, generative
modeling, and user-guided multi-objective search. We organize existing methods
along three dimensions: whether they operate online or offline, whether
they directly optimize acquisition functions or learn conditional generative
models for optimization, if they use inference-time guidance or learning for generation. 

\paragraph{Online vs.\ offline.} Traditional \gls{mobo} methods, such as
hypervolume-based acquisition (\gls{ehvi}, \gls{nehvi}, and their variants),
entropy search \citep{yang2019efficient, daulton2020differentiable,
daulton2021parallel, hernandez2016predictive} and scalarization methods
\citep{knowles2006parego, zhang2007moea, paria2020flexible, de2022mbore},
operate online by sequentially querying the black-box using acquisition rules
that balance exploration and exploitation.  In contrast, offline \gls{mog}
approaches like ParetoFlow and guided diffusion frameworks
\citep{yuan2024paretoflow, yao2024proud, annadani2025preference} train
generative models from a fixed dataset of evaluated designs, without further
oracle queries. While these offline methods can leverage rich generative
priors, they have not been designed to adapt to new information.


\paragraph{Generative models vs.\ acquisition optimization.} Recent advances in
``active generation'' recast black-box optimization as fitting conditional
generative models to high-value regions, guided by predictors and/or
acquisition functions. Methods like \gls{vsd}, GFlowNets, and diffusion-based
solvers \citep{steinberg2025variational, jain2023multi, dhariwal2021diffusion,
gruver2023protein} show that generative search can match or exceed traditional
direct acquisition function optimization, particularly in large search spaces.
However, existing generative frameworks often need re-training to integrate subjective preferences.
Similarly, all direct acquisition optimization methods require additional
optimization runs to incorporate new preferences. An exception is Pareto set
learning \cite{lin2022pareto}, which learns a
neural-net, that maps from scalarization weights to
designs.

\paragraph{Guidance vs.\ learning.} Or inference-time vs.\
re-training/fine-tuning based search. Guided generation methods, such as those
based on guided diffusion and flow matching \citep{gruver2023protein,
yao2024proud, yuan2024paretoflow, annadani2025preference} use a pre-trained
generative model, from which samples are then \emph{guided} at inference time
such that they are generated from a conditional generative model, leaving the
original generative model unchanged. It has been noted in
\cite{klarner2024context} that guided methods, though computationally
efficient, may be prone to co-variate shift preventing them being guided too
far from the support of the pretrained model. Conversely, learning-based
methods such as \cite{steinberg2025variational,wang2025finetuning} explicitly
re-train or fine-tune the generative model to condition it, thereby
circumventing these co-variate shift issues at the cost of more computation,
but allowing for less constrained exploration in online scenarios.

Our \gls{agps} approach unifies these dimensions: it learns an amortized
conditional generative model online, bypasses explicit acquisition optimization,
and uses sequential learning to avoid co-variate
shift. \autoref{tab:comparison} compares key features across representative
\gls{mog} methods.


\begin{table}[tb]
    \caption{Comparison of recent \gls{mog} and related techniques. `\cmark'
    means the method has the feature, `\xmark' the method lacks the feature and
    `--' the method can be easily extended to incorporate the feature. `Modular'
    refers to the non-specific nature of the variational distribution used by
    \gls{cbas}, \gls{vsd} and \gls{agps}, i.e., it can be chosen based on the task.
    }
    \scriptsize
    \centering
    \begin{tabular}{r|c|c|c|c|c|c|c|c}
    \textbf{Method}
        & \rot{Designed for \gls{moo}}
        & \rot{Online \gls{bbo}}
        & \rot{Amortized preference conditioning}
        & \rot{Non-convex Pareto front}
        & \rot{Discrete/mixed $\obsspace$}
        & \rot{Generative pref.\ model, $\qrob{\pparam}{\prefr}$}
        & \rot{Generative obs.\ model, $\qrob{\qparam}{\obs}$}
        & Guide \\
    \hline
    LaMBO~\cite{stanton2022accelerating} & \cmark & \cmark & \xmark & \cmark & \cmark & \xmark &  Masked LM & \gls{nehvi} \\
    LaMBO-2~\cite{gruver2023protein} & \cmark & \cmark & \xmark & \cmark & \cmark & \xmark & Diffusion & \gls{nehvi} \\
    Pareto Set Learning (PSL)~\citep{lin2022pareto} & \cmark & \cmark & \cmark & \cmark & \xmark & \xmark & Deterministic MLP & Scalarization \\
    GFlowNets~\citep{jain2023multi} & \cmark & \cmark & \cmark & \cmark & \cmark & \xmark & GFlowNets & Scalarization \\
    ParetoFlow~\citep{yuan2024paretoflow} & \cmark & \xmark & \xmark & \xmark & \cmark & -- & Diffusion & Scalarization \\
    PROUD~\citep{yao2024proud} & \cmark & \xmark & \xmark & \cmark & -- & \xmark & Diffusion & Multiple grad.\ desc. \\
    Preference Guided Diffusion~\citep{annadani2025preference} & \cmark & \xmark & \xmark & \cmark & \xmark & \xmark & Diffusion & Preference \gls{cpe} \\
    \Gls{cbas}~\citep{brookes2019conditioning} & -- & -- & \xmark & \cmark & \cmark & \xmark & Modular & Dominance \gls{cpe} \\
    \gls{vsd}~\citep{steinberg2025variational} & -- & \cmark & \xmark & \cmark & \cmark & \xmark & Modular & Dominance \gls{cpe} \\
    \gls{agps} (ours) & \cmark & \cmark & \cmark & \cmark & \cmark & \cmark & Modular & Dominance \gls{cpe} \\
    \hline
    \end{tabular}
    \label{tab:comparison}
    \vspace{-1em}
\end{table}

\section{Experiments}
\label{sec:experiments}

We now evaluate \gls{agps} on a number of benchmarks and compare it to some
popular baselines. Firstly we apply \gls{agps} to a number of well known
\emph{continuous} synthetic \gls{moo} test functions as a proof-of-concept.
Then we apply it to three high-dimensional sequence design challenges ---
\gls{agps}'s intended application --- that emulate real protein engineering
tasks. Our primary measure of performance is Pareto front relative \gls{hvi}
\cite{zitzler1999multiobjective, fonseca2006improved} using the implementation
in \cite{balandat2020botorch}. For all experiments we set $\beta = 0.5$ as the
full \gls{kl} regularization in \autoref{eq:aelbo} can hamper exploitation in
later rounds on some tasks. We refer the reader to \autoref{app:experiments}
for full experimental details.

\subsection{Synthetic test functions}
\label{sub:exp_synth}

As a proof-of-concept, we demonstrate \gls{agps} on some classical continuous
paramterized \gls{moo} problems ($\obs \in \real^D$) commonly used for
\gls{mobo}~\cite{zhang2007moea, belakaria2019max, balandat2020botorch}. Even
though \gls{agps} has not been designed for purely continuous problems, they
none-the-less allow us to demonstrate some appealing properties of \gls{agps}.
We present three here; negative \textbf{Branin-Currin} ($D=2$, $L=2$),
\textbf{DTLZ7} ($D=7$, $L=6$), and \textbf{ZDT3} ($D=4$, $L=2$), see
\cite{deb2002scalable,zhang2007moea,belakaria2019max}. More detailed
descriptions of these functions, additional experimental detail and  on
additional test functions are presented in \autoref{app:syn_fn}.
We apply \gls{agps} to these continuous domains using a conditional Gaussian
generative model, $\qrobc{\qparam}{\obs}{\prefr} =
\normalc{\obs}{\boldsymbol{\mu}(\prefr), \boldsymbol{\sigma}^2(\prefr)}$, with
mean and variance parameterized by a \gls{nn}.
The top row of  \autoref{fig:syn_funcs} reports mean relative hypervolume
versus optimization round. The bands indicating $\pm 1$ std.\ from 10 runs with
random parameter initialisation. We compare to three \gls{gp}-based baselines,
qNEHVI \cite{daulton2021parallel}, qEHVI and qNParEGO
\cite{daulton2020differentiable}. All methods use 64 training points, and then
recommend $B=5$ candidates for $T=10$ rounds, and \gls{agps} has $\thresh_0$
set using the $p=0.25$ percentile of Pareto ranks. On Branin-Currin, \gls{agps}
(purple) rapidly outpaces the scalarization based qNParEGO, but is dominated by
the methods that explicitly estimate \gls{ehvi}. However, these methods do not
scale to the higher dimensional DTLZ7 problem, where A-GPS still outperforms
qNParEGO. While \gls{agps} can model the complex Pareto front of ZDT3, the
direct optimization methods outperform it, we suspect a stronger generative
backbone for continuous data would help here.
\begin{figure}[tb]
    \centering
    \includegraphics[width=0.329\linewidth]{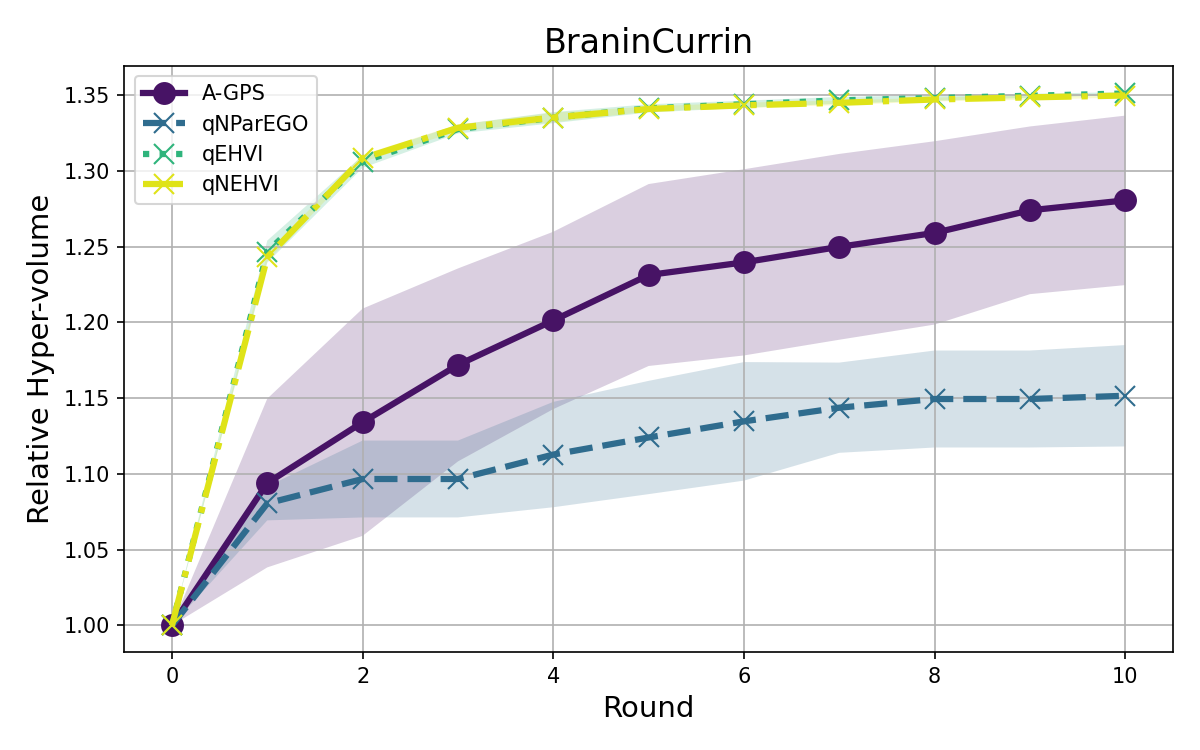}
    \includegraphics[width=0.329\linewidth]{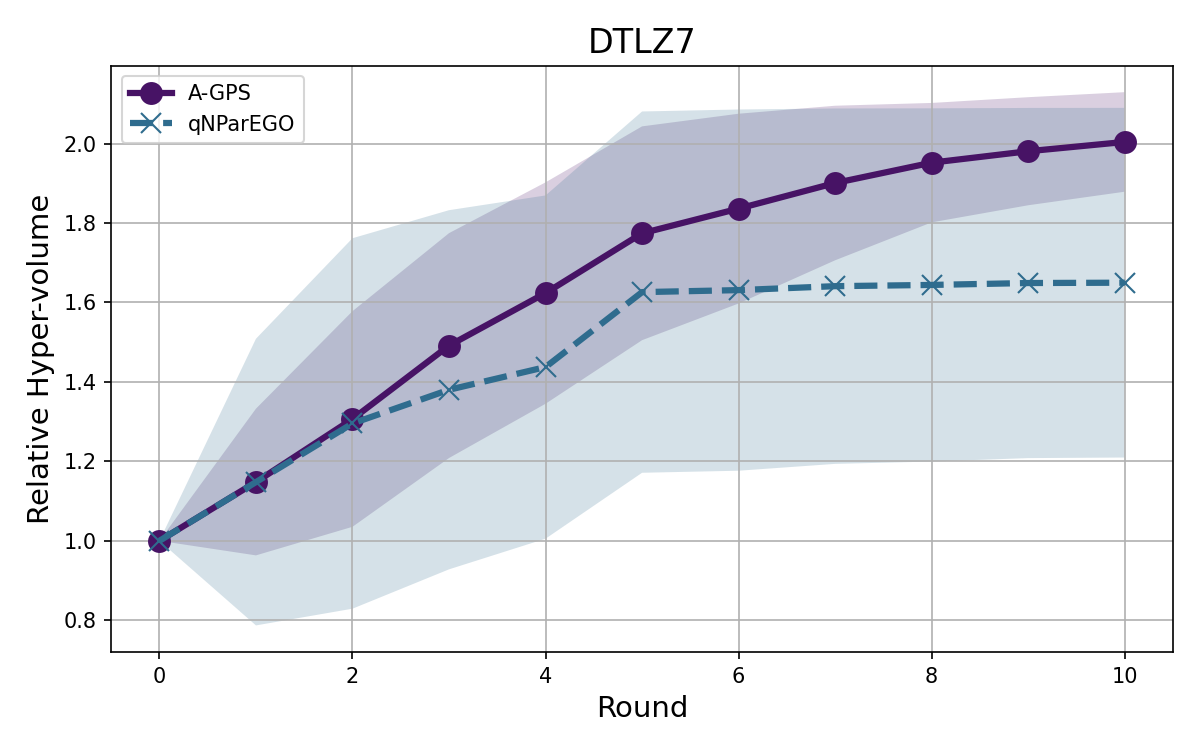}
    \includegraphics[width=0.329\linewidth]{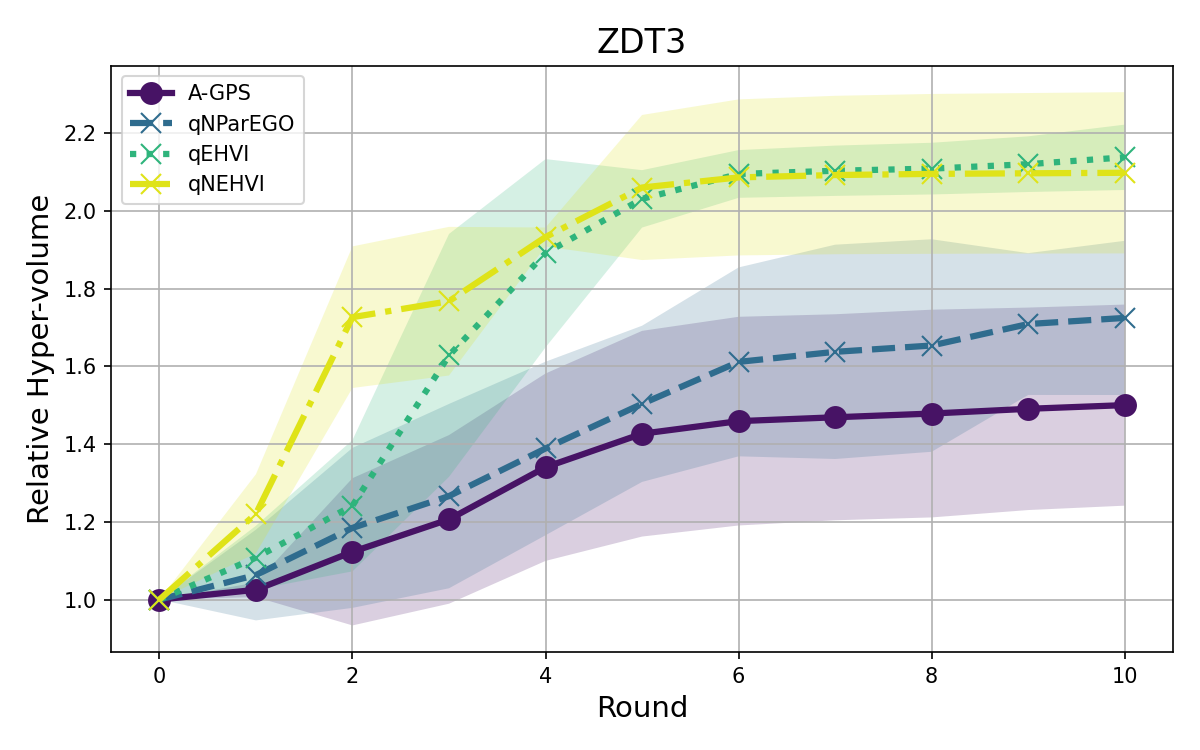} \\
    \subcaptionbox{Negative Branin-Currin \label{sfig:bc}}
    {\includegraphics[width=0.329\linewidth]{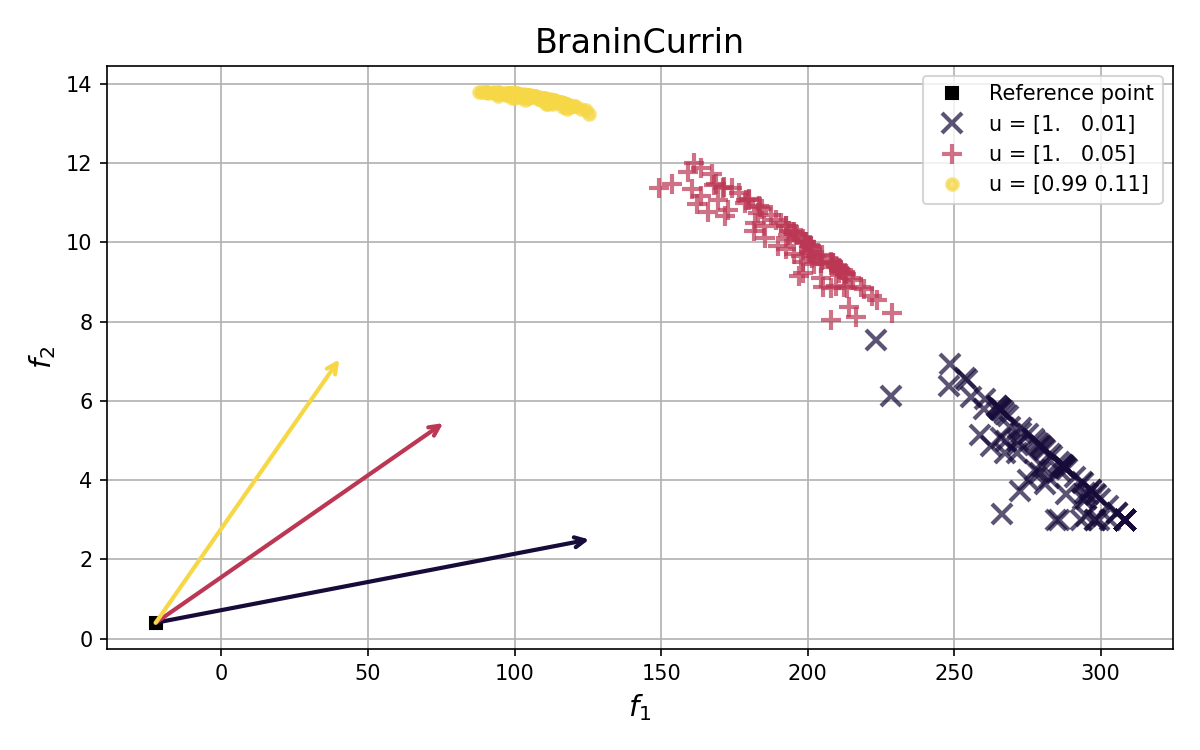}}
    \subcaptionbox{DTLZ7\label{sfig:dtlz7}}
    {\includegraphics[width=0.329\linewidth]{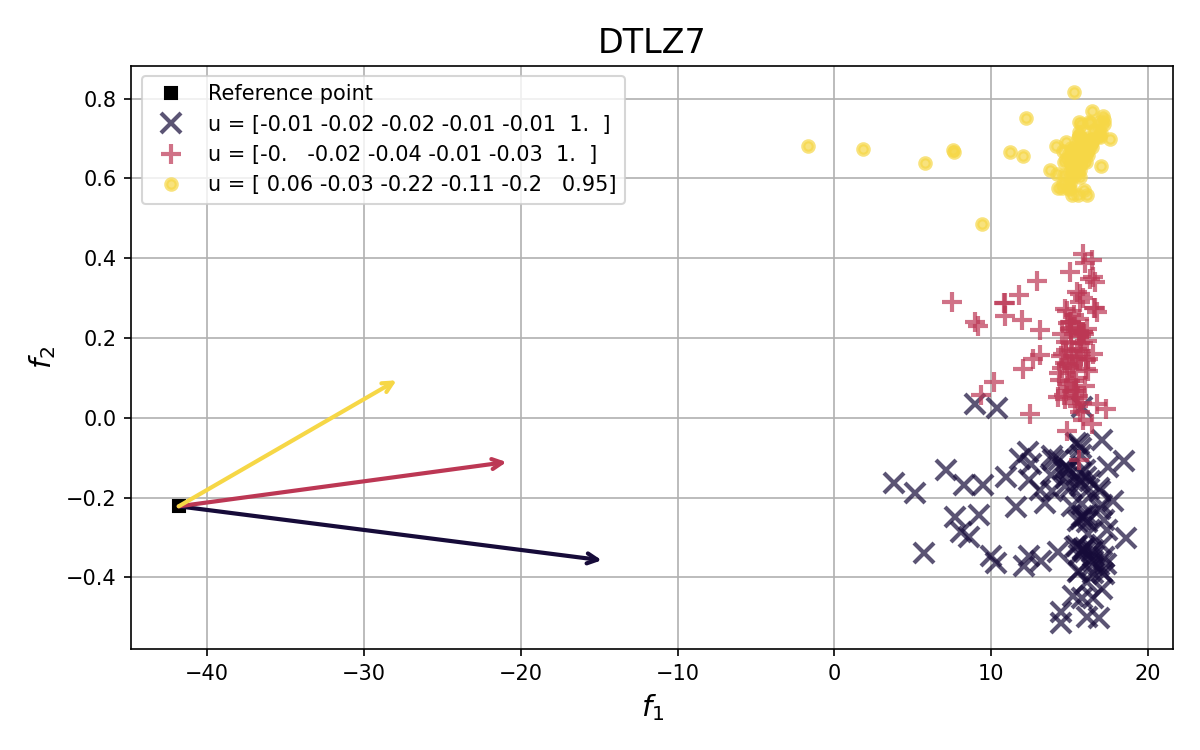}}
    \subcaptionbox{ZDT3\label{sfig:zdt3}}
    {\includegraphics[width=0.329\linewidth]{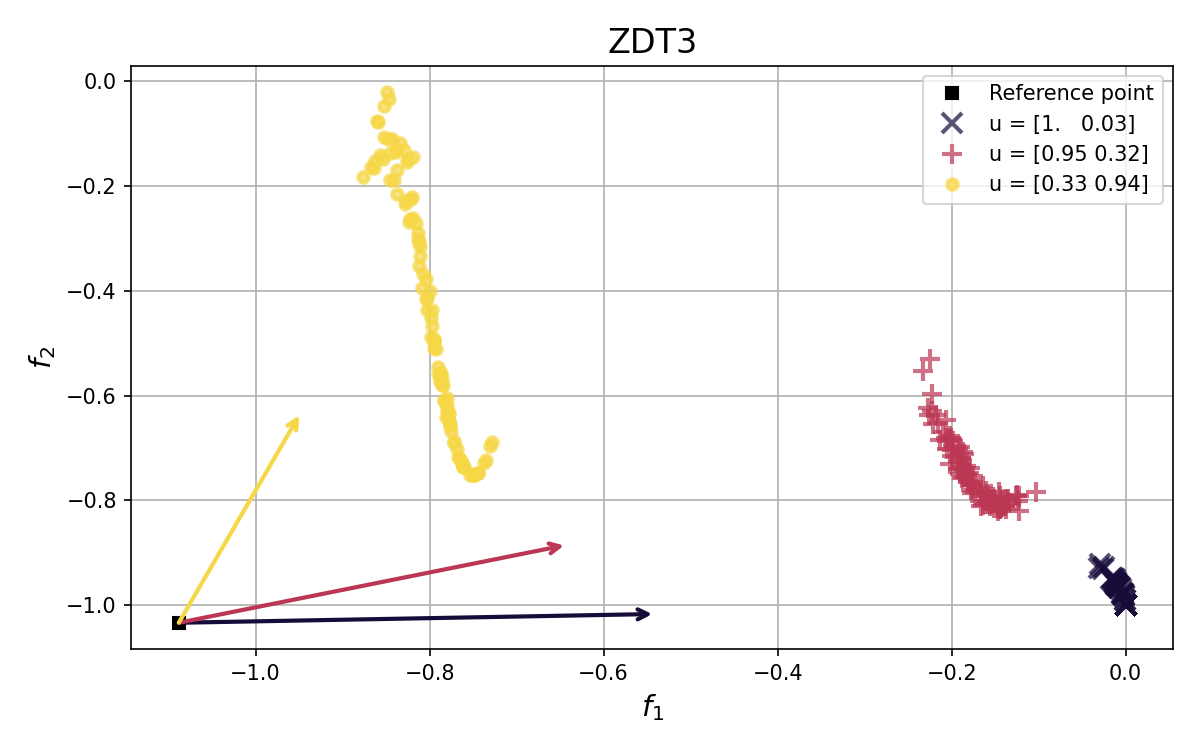}}
    \vspace{-0.3em}
    \caption{Experimental results on three test functions commonly used in the
        \gls{mobo} literature. The top row reports \gls{hvi} per round, the
        bottom row demonstrates amortized preference conditioning by
        generating Pareto front samples (DTLZ7 is a PCA projection of the
        front).%
    }
    \label{fig:syn_funcs}
    \vspace{-1em}
\end{figure}
The bottom row of  \autoref{fig:syn_funcs} illustrates preference conditioning:
each panel plots the sampled Pareto front (dots) from
$\qrobc{\qparam}{\obs}{\prefr_\star}$ colored by three representative preference
directions $\prefr_\star$. These $\prefr_\star$ were chosen by,
\begin{equation}
    \tars_\star \in \{
        [Q_{\fpareto^{t,1}}\!(0.9), Q_{\fpareto^{t,2}}\!(0.1)],
        [\text{Av}(\fpareto^{t,1}), \text{Av}(\fpareto^{t,2})],
        [Q_{\fpareto^{t,1}}\!(0.1), Q_{\fpareto^{t,2}}\!(0.9)]
    \},
\end{equation}%
where $Q$ is an empirical quantile function and $\text{Av}$ denotes the set
mean of each dimension, $l$, of the observed Pareto front, $\fpareto^{t,l}$. We
then use \autoref{eq:pref_train} to convert these into $\prefr_\star$ with an
automatically inferred reference point $\refp$. We project the higher
dimensional DTLZ7 outcomes into two dimensions using \gls{pca} for this
visualisation. Overall, these results show that \gls{agps} supports flexible,
a-posteriori preference conditioning across a variety of continuous landscapes.

\subsection{Ehrlich vs.\ naturalness}

We now evaluate \gls{agps} on a challenging two-objective synthetic `peptide' design task
that couples the Ehrlich synthetic landscape \cite{stanton2024closed} with a
ProtBert \cite{elnaggar2020prottrans} `naturalness' score. The Ehrlich function
has been designed to emulate key aspects of protein fitness; it maps each
discrete sequence to a scalar by embedding combinatorial motif interactions
in a highly rugged, multi-modal, but artificial terrain.
In stark contrast, ProtBert's loss reflects genuine amino-acid patterns learned
from 217 million real proteins. We use protein sequences $\obsspace =
\acidspace^M$ where $\card{\acidspace} = 20$ and $M \in \{15, 32, 64\}$. The
two objectives are,
\begin{equation}
    \bbf^1(\obs) = \text{Ehrlich}(\obs), \qquad
    \bbf^2(\obs) = e^{-\mathcal{L}_\text{ProtBert}(\obs)}.
\end{equation}%
Here we convert ProtBert's log-loss, $\mathcal{L}_\text{ProtBert}(\obs)$ into a
likelihood ($\bbf^2 \in [0, 1]$), making it output in a comparable range to the
Ehrlich function ($\bbf^1 \in \{-1\} \cup [0, 1]$). We compare against a
baseline that randomly mutates the set of best candidates,
\gls{cbas} \cite{brookes2019conditioning} and \gls{vsd}
\cite{steinberg2025variational}, which use the same 2-layer CNN Pareto \gls{cpe} as
\gls{agps}, and against the guided diffusion based LaMBO-2
\cite{gruver2023protein}, which is formulated for discrete \gls{mobo} tasks
using \gls{ehvi} guidance. \Gls{agps} uses the same CNN architecture for its alignment \gls{cpe}.
\Gls{cbas} and \gls{vsd} use a causal transformer
architecture for their $\qrob{\qparam}{\obs}$. \gls{agps} uses the same
backbone transformer, but embeds $\prefr$ for its prefix token and uses FiLM
\cite{perez2018film} on the transformer embeddings for conditioning the
transformer on preferences, $\qrobc{\qparam}{\obs}{\prefr}$. The same
(unconditional) architectures are used as priors by \gls{cbas}, \gls{vsd} and
\gls{agps}, which are trained on the initial sequences using maximum
likelihood. Unlike in \cite{steinberg2025variational,brookes2019conditioning},
we allow \gls{cbas} to resample $\qrob{\qparam}{\obs}$ between rounds (once
every 100 iterations) --- this drastically improves its performance.
Results are reported in \autoref{fig:ehrlich_nat} for $T=40$ rounds from random
starting conditions (bands indicating $\pm 1$ std). All methods are given 128
training samples, and then recommend batches of size $B=32$ per round, and
$\thresh_0$ is set as $p=0.25$ percentile of Pareto ranks. We use the
\texttt{poli} and \texttt{poli-baselines} libraries for running the benchmarks
and LaMBO-2 baseline \cite{gonzalez-duque2024poli}. Additional experimental
details and timing results are in \autoref{app:ehrlich}. \Gls{agps} performs
similarly to \gls{vsd}, \gls{cbas} tends to be overly exploitative, and
LaMBO-2's guided masked-diffusion model tends to under-perform on this task.

\begin{figure}[tb]
    \centering
    \subcaptionbox{$M=15$\label{sfig:ehrlich-nat-15}}
    {\includegraphics[width=0.329\linewidth]{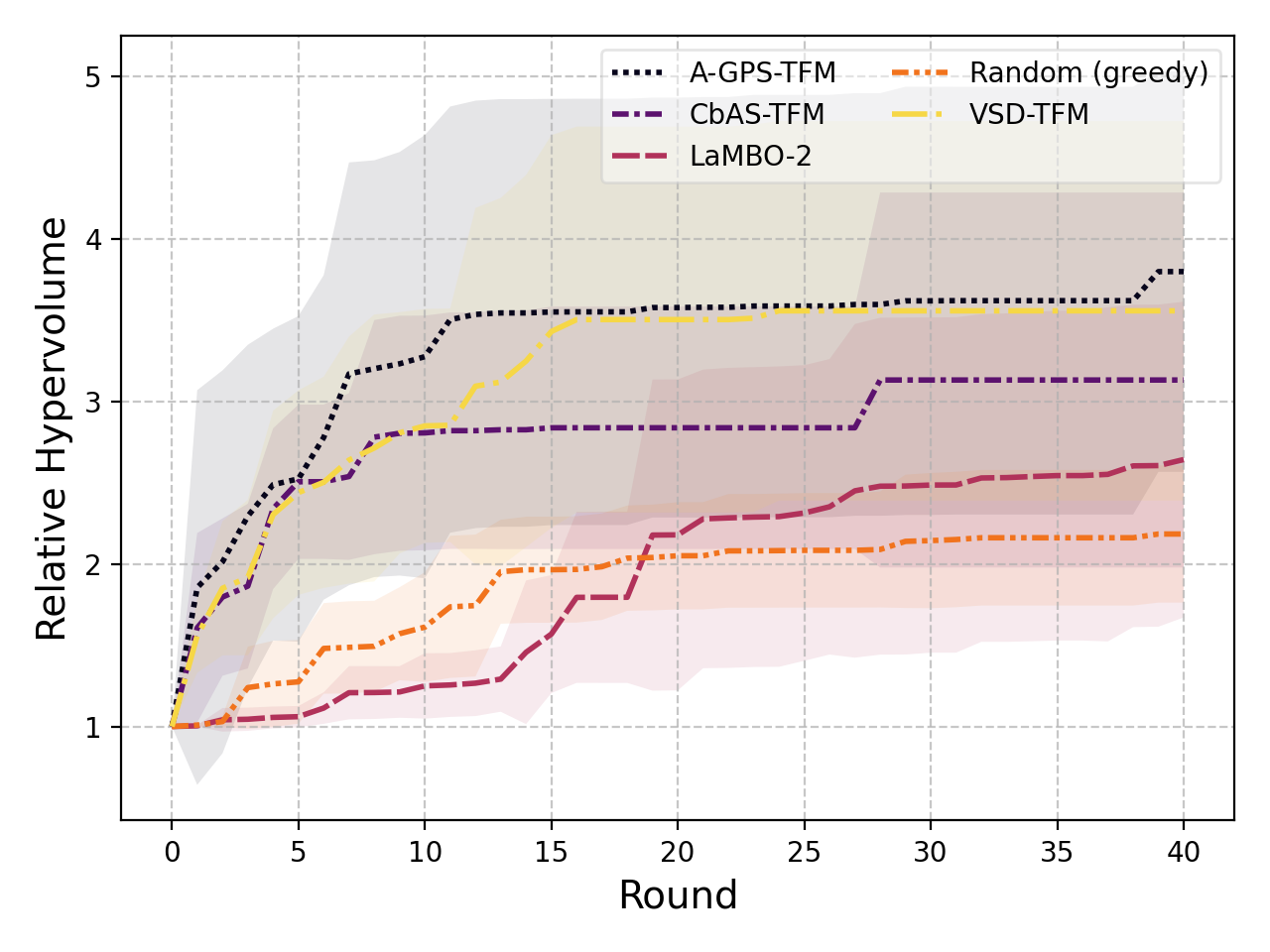}}
    \subcaptionbox{$M=32$\label{sfig:ehrlich-nat-32}}
    {\includegraphics[width=0.329\linewidth]{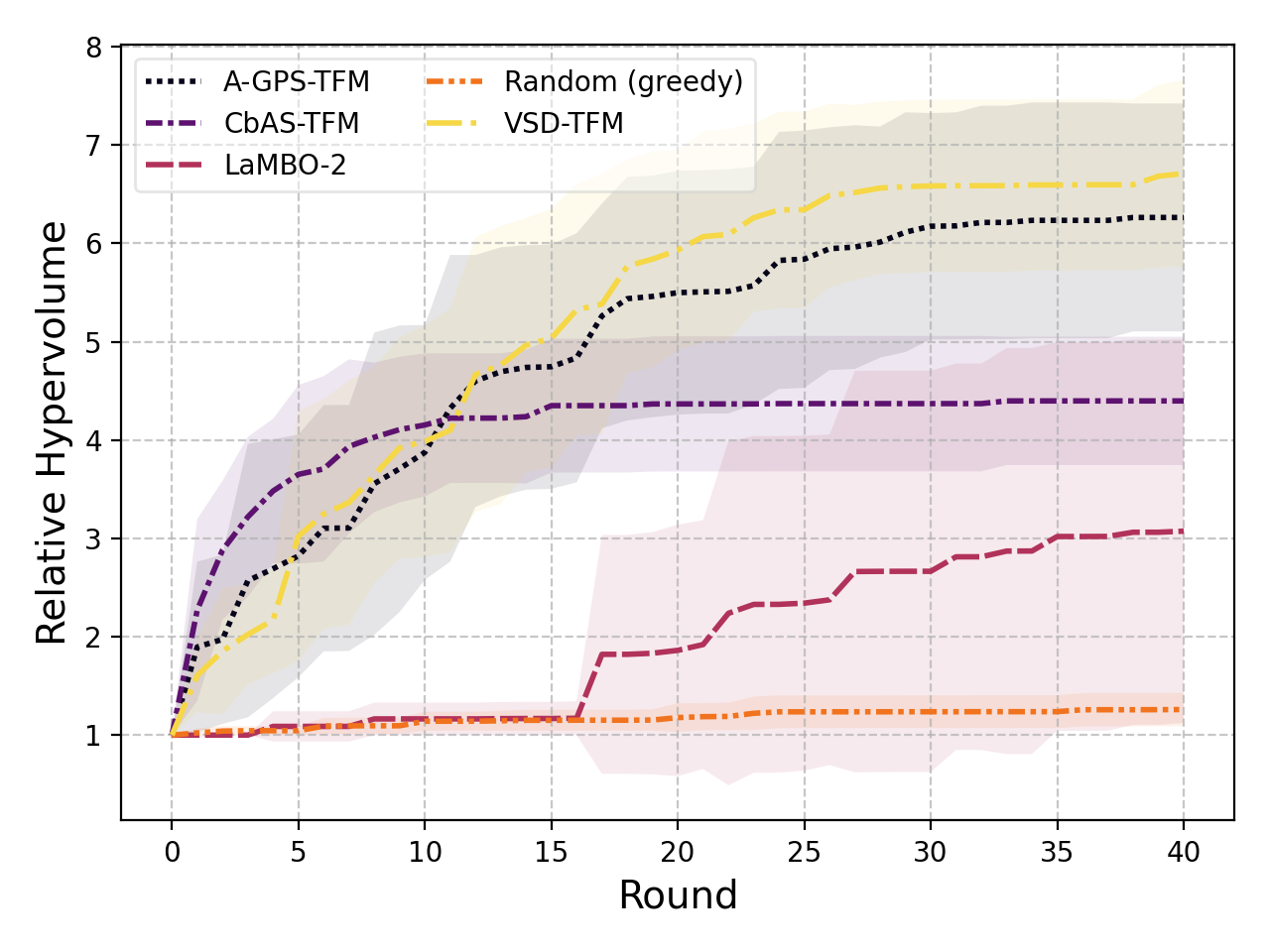}}
    \subcaptionbox{$M=64$\label{sfig:ehrlich-nat-64}}
    {\includegraphics[width=0.329\linewidth]{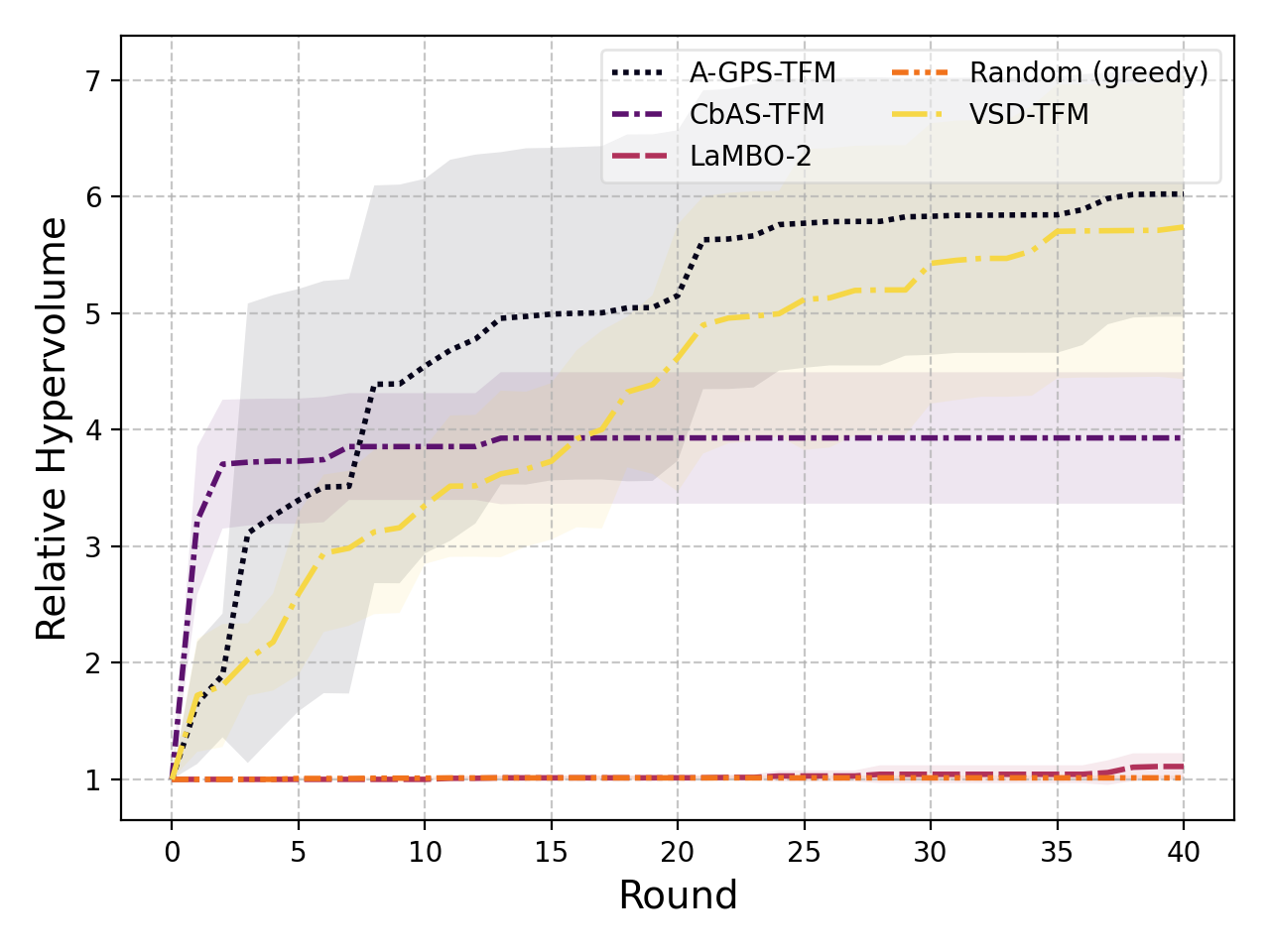}}
    \vspace{-0.3em}
    \caption{Ehrlich function vs.\ ProtBert naturalness score for different
    sequence lengths.}
    \label{fig:ehrlich_nat}
    \vspace{-1em}
\end{figure}

\subsection{Bi-grams}

For this experiment we use the bi-grams optimization task from
\cite{stanton2022accelerating} where the aim is to maximize the occurance of
three bi-grams (`AV', `VC' and `CA') in an $M=32$ length sequence. We start
with 512 random sequences that have no more than three of these bi-grams
present. We then have $T=64$ rounds of $B=16$ to optimize the bi-gram
occurances. The initial Pareto front is sparse so $\thresh_0$ is the $p=0.5$
percentile of Pareto ranks. The same models as the previous experiment are used
along with a masked-transformer model (mTFM) backbone for \gls{cbas}, \gls{vsd}
and \gls{agps}. This allows control over the number of mutations to apply to an
existing sequence, rather than generating a complete sequence from scratch ---
see \autoref{app:arch} for details. LaMBO-2's masked diffusion backbone also
has this ability, and following \cite{stanton2022accelerating} we use a
1-mutation budget for these models. Relative hypervolume improvement results
are presented in \autoref{fig:ngrams-foldx} (a) and (b), sequence diversity is
computed as the average pair-wise edit distance between all sequences. We can
see the causal transformer backbone \gls{agps} and \gls{vsd} models perform
best, followed by LaMBO-2. \gls{cbas} overfits early as we can see from the
diversity plot, and the mTFM models perform similarly to the random baseline and
take many rounds to show any improvement on this task.

\begin{figure}[tb]
    \centering
    \subcaptionbox{Bi-grams hypervolume\label{sfig:ngram-hv}}
    {\includegraphics[width=0.33\linewidth]{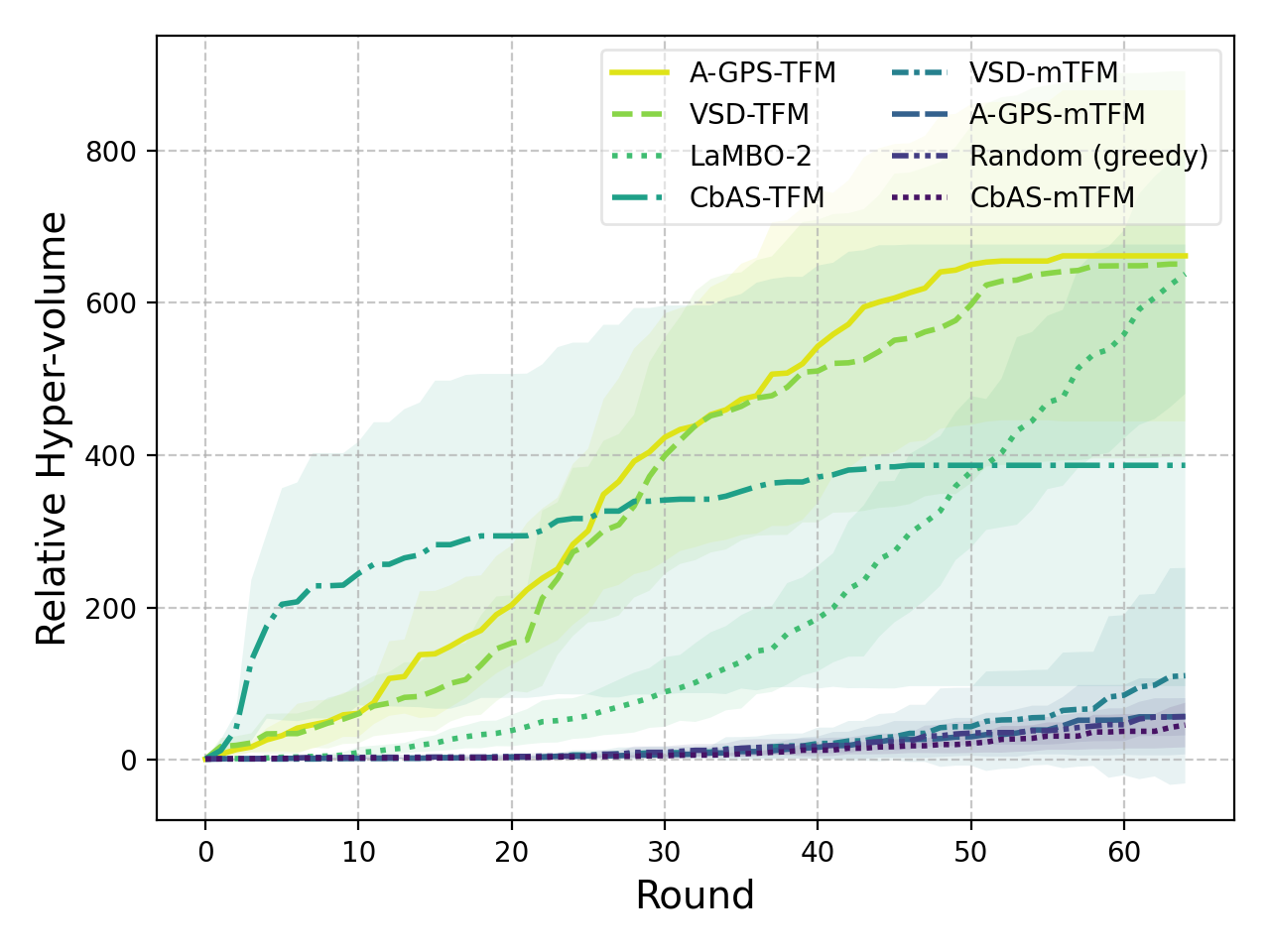}}
    \subcaptionbox{Bi-grams diversity\label{sfig:ngram-div}}
    {\includegraphics[width=0.33\linewidth]{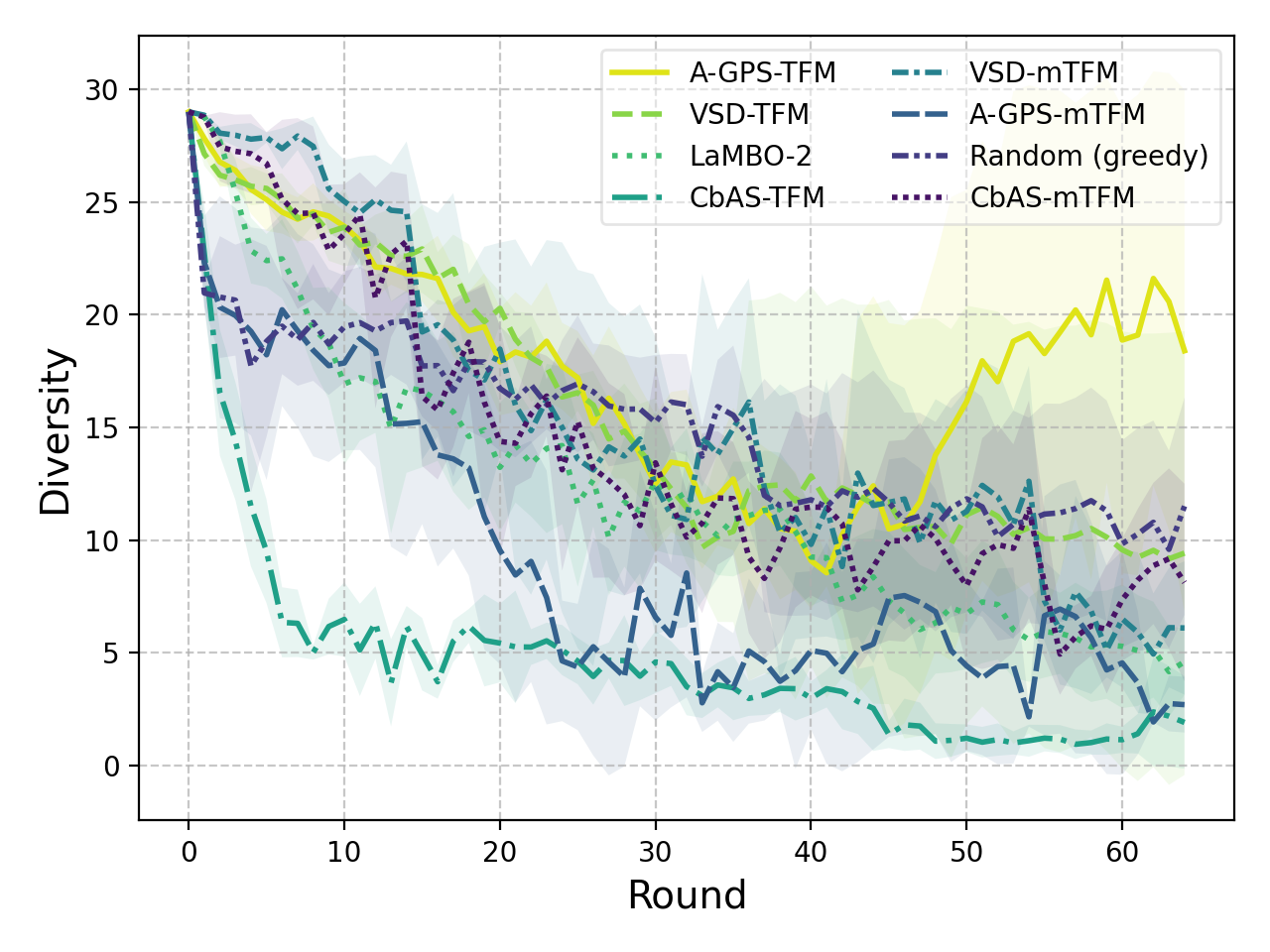}} \\
    \subcaptionbox{Stability vs.\ SASA hypervolume\label{sfig:foldx-hv}}
    {\includegraphics[width=0.33\linewidth]{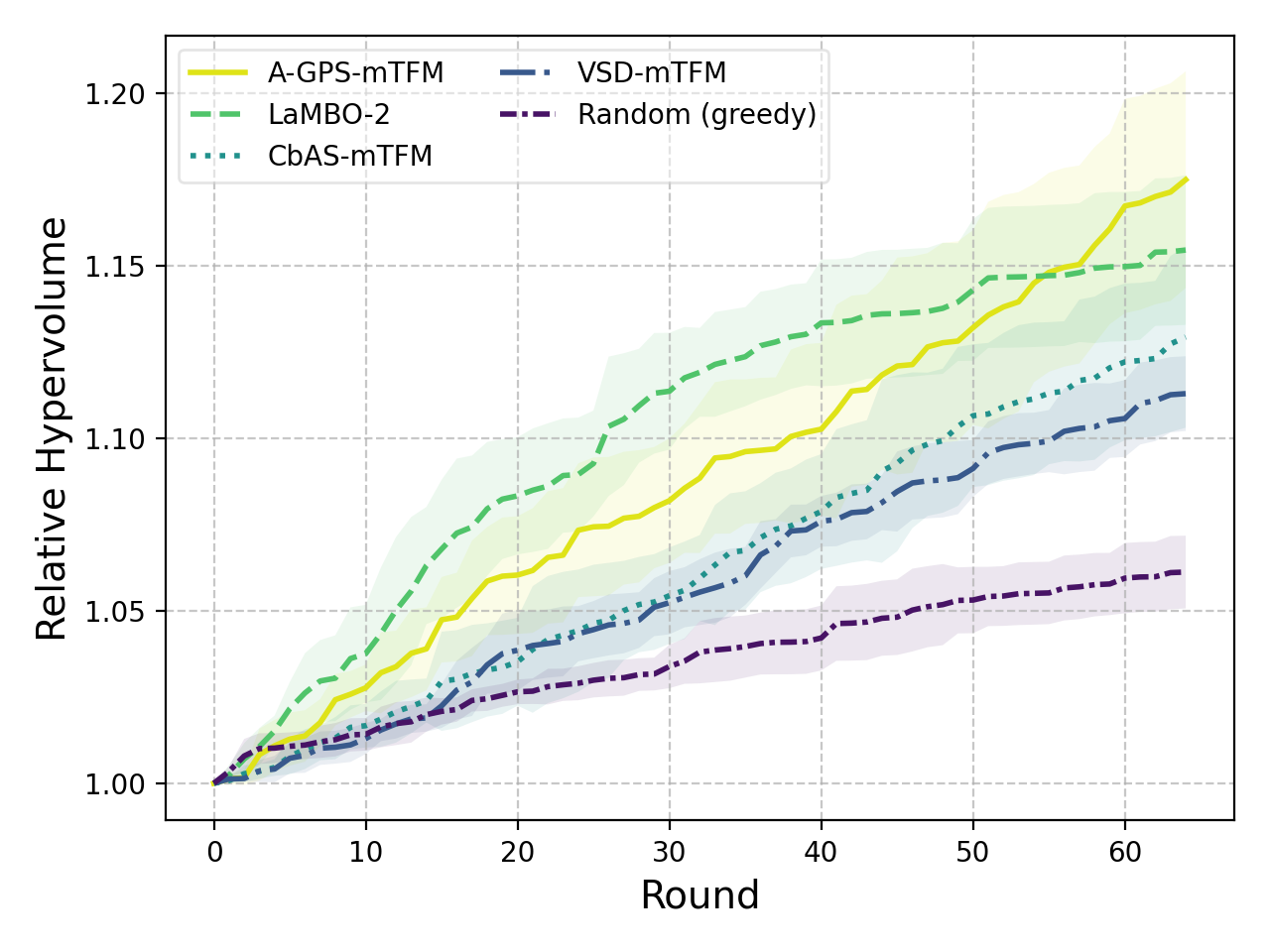}}
    \subcaptionbox{Stability vs.\ SASA diversity\label{sfig:foldx-div}}
    {\includegraphics[width=0.33\linewidth]{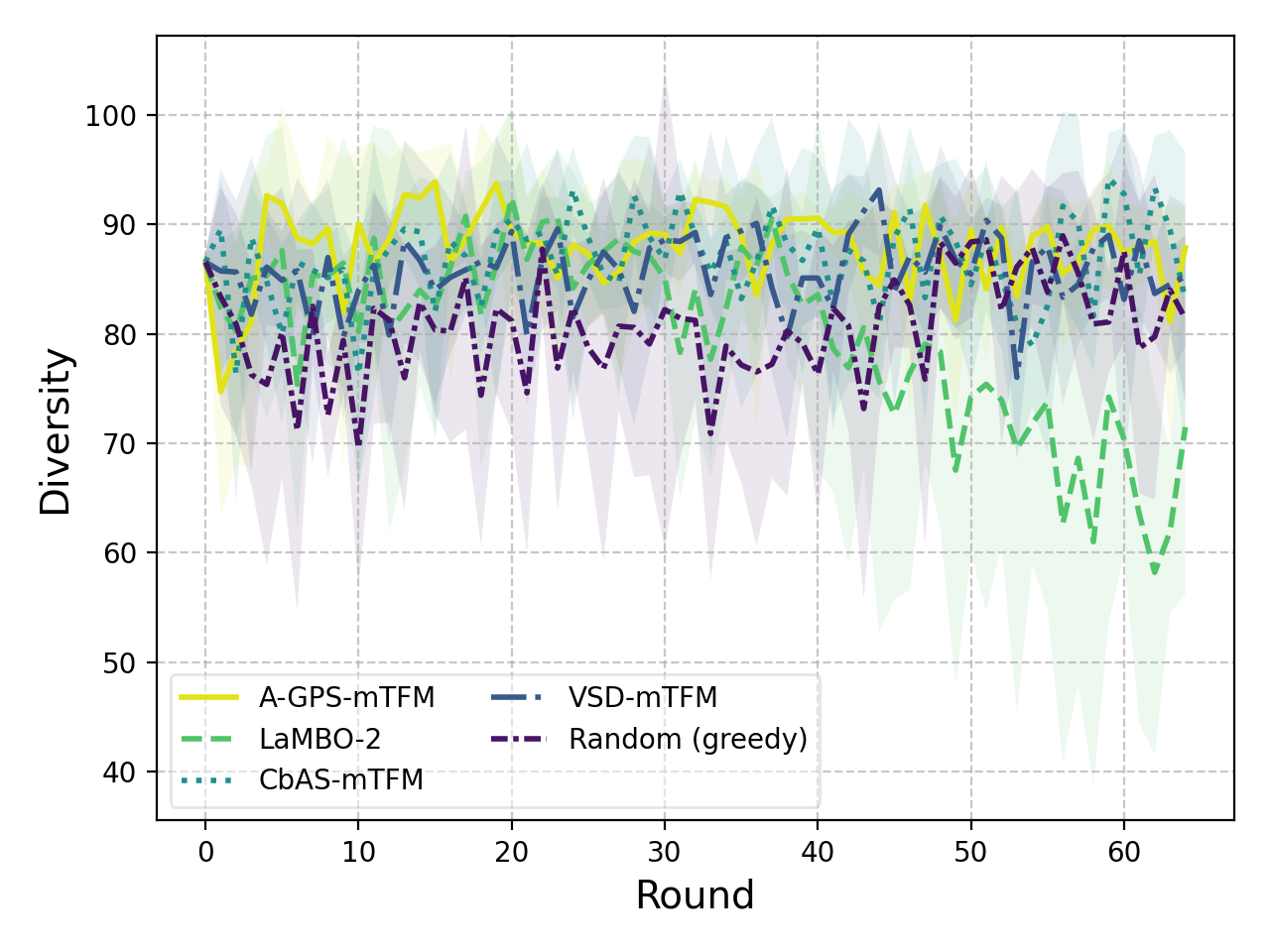}}
    \vspace{-0.3em}
    \caption{Bi-grams and Stability vs.\ SASA results for relative
    hypervolume (HV) improvement and diversity.}
    \label{fig:ngrams-foldx}
    \vspace{-1em}
\end{figure}

\subsection{Stability vs.\ SASA}

Our final experiment uses the simulation-based protein stability vs.\ solvent
accessible surface area (SASA) task from \cite{stanton2022accelerating}. The
aim is to optimize six base red fluorescent proteins with $M > 200$ for
stability ($-\Delta G$) and SASA. We use the FoldX black-box implementation in
\texttt{poli} \cite{gonzalez-duque2024poli}, with 512 training samples, $T=64$,
$B=16$, and a budget of one mutation per round. We have a rich starting Pareto
front so $\thresh_0$ is set from $p = 0.1$. We only use the mTFM backbone for
\gls{cbas}, \gls{vsd} and \gls{agps} as FoldX is best modelling only small
differences to the original sequences. The results are summarised in
\autoref{fig:ngrams-foldx} (c) and (d). LaMBO-2 initially performs well, but is
overcome by \gls{agps} as the sequence diversity diminishes.
See \autoref{app:foldx} for additional plots of the estimated Pareto front.

\section{Limitations and Discussion}
\label{sec:discussion}

We have considered the problem of active multi-objective generation, which
frames discrete black-box multi-objective optimization as an online sequential
generative learning task. Our proposed solution leverages recent advances in
generative models to estimate a distribution, $\qrobc{\qparam}{\obs}{\prefr}$,
of the Pareto set directly, conditioned on user preferences $\prefr$, which we
call \acrfull{agps}.

\paragraph{Limitations.} A limitation with \gls{agps}, and one that it shares
with many \gls{mobo} and \gls{mog} methods, is that it can be hard to specify
algorithm hyper-parameters a-priori --- before new data has been acquired ---
and the settings of these hyper-parameters can effect real-world performance.
We are mindful of this in our implementation and design of \gls{agps}, and as
such it comprises components that can be independently trained and validated
meaningfully on the initial training data at hand. In particular, we find
\gls{agps}, \gls{vsd} and \gls{cbas} are sensitive to the prior model used,
$\probc{\obs}{\data_0}$. To aid practitioners use these methods, we outline a
general procedure we find works well in \autoref{app:fit_prior} for prior
choice and initialization. Another limitation with \gls{agps} as presented in
this work is the choice of the generative model for the continuous synthetic
test functions. The MLP used does not appear to scale well to
higher-dimensional problems, and we expect using flow-matching
\cite{lipman2022flow} or diffusion \cite{ho2020denoising} backbones would
remedy this issue, and incorporating these models into the \gls{vsd} and
\gls{agps} frameworks is an active research direction.

Another avenue of future work would be to explore the rich literature on \glspl{moea}
for alternative multi-objective measures beyond scalarizations and hypervolume indicators that may be suitable for guided \gls{mog} \cite{afsar2023many}. Additionally, this literature gives a thorough treatment of stochastic experiments in the multi-objective setting \cite{rojasgonzalez2025bi} which may be useful for \gls{mog}.

In contrast to other approaches that are dependent on diffusion models, our
choice of generative model is flexible and modular. Our empirical experiments
demonstrate that our method performs well on high dimensional sequence design
tasks. We hope that our modular framework will result in many future extensions
to different architectures for generative models, resulting in further
practical algorithms for active generation in large search spaces.  For code
implementing \gls{agps}, \gls{vsd} and all of the experimental results, please see
\href{https://github.com/csiro-funml/variationalsearch}{\texttt{github.com/csiro-funml/variationalsearch}}.

\section*{Acknowledgements}
This work is funded by the CSIRO Science Digital and Advanced Engineering
Biology Future Science Platforms, and was supported by resources and expertise
provided by CSIRO IMT Scientific Computing. We would like to thank the
anonymous reviewers for the constructive feedback and advice, and Sebastian Rojas Gonzalez for pointing us to the literature on \glspl{moea} and for the discussion on estimating \gls{phvi} --- all of which greatly increased the quality of this work.

{
\bibliography{vsd}
}

\newpage

\appendix

\section{Broader Impacts}
\label{app:impacts}

This work is motivated by applications that aim to improve societal
sustainability, for example, through the engineering of enzymes to help control
harmful waste. However, as with many technologies, it carries the risk of
misuse by malicious actors. We, the authors, explicitly disavow and do not
condone such uses.

\section{Additional Methodology Details}
\label{app:notes}

\begin{figure}[htb]
    \centering
    \includegraphics[width=\linewidth]{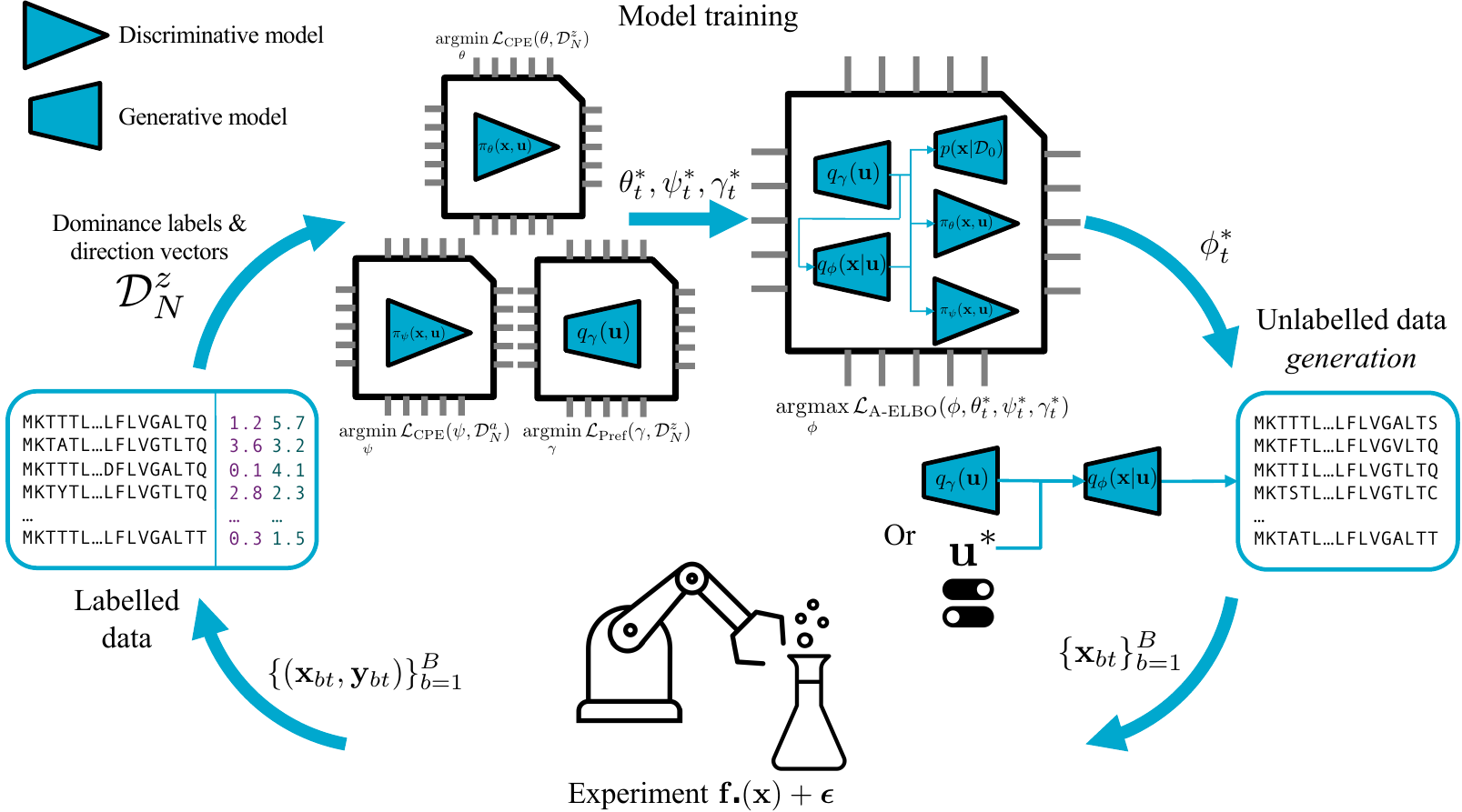}
    \caption{
        A visual depiction of \autoref{alg:optloop} --- \Gls{agps} learns all
        the distributions involved by optimizing different components of a
        reverse \gls{kl} loss. At time $t$, the optimized variational
        distribution $\qrobc{\qparam}{\obs}{\prefr}$ with parameters
        $\qparam^*_t$ is used to generate new designs that can incorporate new
        user's preferences $\prefr^*$. We iterate until a convergence/user
        criterion is satisfied.
    }
    \label{fig:agps-loop}
\end{figure}

\subsection{A Non-dominance CPE is estimating probability of hypervolume improvement}
\label{app:proof}

Using a \gls{cpe} trained on Pareto non-dominance labels, $\lablf$, is
equivalent to estimating the \gls{phvi} under some general conditions. The
proof is based on the current observed Pareto set and front, $\pareto^t$,
$\fpareto^t$, respectively, and relies on the box-decomposition definition of
hypervolume. We use $\tars$ as a shorthand for $\tars(\obs) = \bbfs(\obs) +
\errs$, and $\tars'$ for $\tars(\obs') = \bbfs(\obs') + \errs'$, etc., where the
context is clear. Firstly, our proof relies on the following assumptions,
\begin{assumption}[Measurement noise]\label{ass:noise}
    No objectives exhibit measurement noise, $\errs = \mathbf{0}$.
\end{assumption}%
This is so our dominance comparison, $\succ$, and hypervolume computation are exact with no ambiguity introduced from noisy measurements. We will discuss the consequences of relaxing this assumption later.

\begin{assumption}[Reference Point]\label{ass:ref}
    The reference point, $\refp \in \real^L$ is strictly dominated by every
    feasible objective vector,
    \begin{equation}
        \forall \tars \in \mathcal{Y} \subset \real^L, \qquad \tars \succ \refp.
    \end{equation}%
\end{assumption}%
This avoids negative-volume boxes when computing hypervolume, i.e.\ boxes must
have positive or no contribution to hypervolume.


We define a box of hypervolume for any $\tars \in \mathcal{Y}$ under
\autoref{ass:noise} and \autoref{ass:ref} as,
\begin{equation}
    \hvbox{\tars} := [\refp, \tars] = [r_1, \tar_1] \times [r_2, \tar_2] \times \dots \times [r_L, \tar_L].
    \label{eq:box}
\end{equation}
Then let $\lambda_L$ denote an $L$-dimensional Lebesgue measure, so the
(dominated) hypervolume of a finite set, $\mathcal{A} \subset \real^L$ is,
\begin{equation}
    \hv{\mathcal{A}} := \lambda_L \Big(\!\bigcup_{\tars \in \mathcal{A}} \hvbox{\tars} \Big).
    \label{eq:hv}
\end{equation}
With this we can define \acrfull{hvi},
\begin{equation}
    \hvi{\obs} := \hv{\fpareto^t \cup \{\tars(\obs)\}} - \hv{\fpareto^t}.
    \label{eq:hvi}
\end{equation}
In addition, for any $\solnspace \subset \obsspace$, let $\paretofn{\solnspace}$ denote the Pareto subset of $\solnspace$, i.e.,
\begin{equation}
    \paretofn{\solnspace} := \{\obs \in \solnspace: \obs' \not\succ \obs, \forall \obs' \in \solnspace \backslash \obs \}\,.
\end{equation}
Note that $\paretofn{\solnspace \cup \{\obs\}} = \paretofn{\solnspace}$ for any $\obs \in \obsspace$ that is dominated by an element of $\solnspace$.
With these simple definitions and assumptions, and noting that:
\begin{equation}
    \lablf(\obs) = \indic{\obs \in \paretofn{\pareto^t \cup \{\obs\}}}\,, 
\end{equation}
we have the following result.

\mainthm*%
\begin{proof}
    This is straightforward to see if we consider the dominated and
    non-dominated cases.

    \textbf{Case 1:} $\obs$ is dominated. Having $\lablf(\obs) = 0$ implies that the condition $\obs' \not\succ \obs, \forall \obs'\in\pareto^t$ is not satisfied and the Pareto set remains unchanged, i.e., $\paretofn{\pareto^t \cup \{\obs\}} = \pareto^t$, so that $\hv{\fpareto^t \cup \{\tars\}} = \hv{\fpareto^t}$. Hence, there is no hypervolume improvement,
    \begin{equation}
        \lablf(\obs) = 0 \quad \Rightarrow \quad \indic{\hvi{\obs} > 0} = 0.
    \end{equation}

    \textbf{Case 2:} $\obs$ is non-dominated. If
    $\obs' \not\succ \obs$ for all $\obs' \in \pareto^t$, there is \emph{no} $\tars' \in \fpareto^t$ such that $\hvbox{\tars} \subset \hvbox{\tars'}$.
    For $\hvi{\obs}$ to be positive, we need to show that $\hv{\fpareto^t} < \hv{\fpareto^t \cup \{\tars\}}$, i.e., whatever remains of the difference between $\hvbox{\tars}$ and the previous union $\bigcup_{\tars' \in \fpareto^t}\!\hvbox{\tars'}$ must be a set of positive Lebesgue measure. Indeed, letting $\gamma := \min_{\obs' \in \pareto^t} \max_{i\in\{1,\dots, L\}} \tar_i - \tar'_i$, which is positive due to dominance, and setting $\delta := \min\{\gamma/2, \min_{i \in \{1,\dots, L\}} (\tar_i - r_i)/2\} > 0$ (by \autoref{ass:ref}), we have that the $\delta$-box $\mathrm B_\delta := \prod_{i=1}^L [\tar_i - \delta, \tar_i] \subset \hvbox{\tars}$ is not covered by the union $\bigcup_{\tars' \in \fpareto^t}\!\hvbox{\tars'}$. It then follows that $\hvi{\obs} = \hv{\fpareto^t \cup \{\tars\}} - \hv{\fpareto^t} \geq \lambda_L(\mathrm B_\delta) > 0$, confirming that
    \begin{equation}
        \lablf(\obs) = 1 \quad \Rightarrow \quad \indic{\hvi{\obs} > 0} = 1.
    \end{equation}
    Thus, for any choice $\obs \notin \pareto^t$ we have $\lablf(\obs) = \indic{\hvi{\obs'} > 0}$.
\end{proof}
\phvicor*

It is worth noting that we do not require \autoref{ass:noise} for these indicators to be equivalent so long as the same noisy samples, $\tars_i$, are being used when computing them. However, we can no longer claim that the indicators themselves are reliably computing hypervolume improvement or non-dominance since the addition of noise makes the underlying  comparisons ambiguous with respect to the true objectives, $\bbfs$.
It may still be possible to show convergence of A-GPS to the true Pareto set, $\pareto$, while using noisy observations. The authors of \gls{vsd} \cite{steinberg2025variational} were able to guarantee convergence using a \gls{cpe} trained with noisy \gls{pi} labels for single objective \gls{bbo} using results from the \gls{ntk} literature (see their Appendix F). Some of these results may be extended to Pareto set classification, though we leave this for future work, as all of our experiments are noise-free.

\subsection{Fitting the prior}
\label{app:fit_prior}

We found that \gls{agps}, \gls{vsd} and \gls{cbas}, which share generative
backbones, were all very sensitive to the choice of prior. For complex
problems, the best results were obtained in general when the prior is chosen to
be of the same form (or an unconditional variant) as the variational
distribution, and fit to the $T=0$ training data. However, for flexible
backbone models like transformers, it can be easy to overfit to this training
data. We found that adding dropout in conjunction with an early stopping
training procedure on the initial training dataset reliably led to good
performance for all methods. The exact procedure used is listed as follows,
\begin{enumerate}
    \item Set prior dropout $p$
    \item Train prior with a 10\% validation set, make note of number of iterations
        when validation loss begins to increase
    \item Train prior with all data for the number of iterations noted in the
        previous step
    \item If appropriate, copy weights to variational distribution (without
        dropout).
\end{enumerate}
When the prior and variational models were compatible, we initialized the
variational model with these learned prior weights. We run an ablation of this
procedure in \autoref{app:dropout}.

\section{Architectural Details}
\label{app:arch}

\subsection{Preference direction distributions}
\label{app:pref_dist}

In all the experiments we use a mixture of isotropic Normal distributions where
the samples have been constrained to the unit norm,
\begin{equation}
    \qrob{\pparam}{\prefr} =
        \sum^K_{k=1} w_k
        \mathcal{N}_{\|\prefr\|}(\prefr | \boldsymbol{\mu}_k, \boldsymbol{\sigma}^2_k),
    \label{eq:mixpref}
\end{equation}
and $\pparam = \{(\boldsymbol{\mu}_k, \boldsymbol{\sigma}_k)\}^K_{k=1}$.
Typically, we find $K=5$ is sufficient. We learn this via maximum likelihood as
per \autoref{eq:ml_pref}, but we add an extra regularisation term: $-
\frac{1}{K}\sum^K_{k=1}(\|\boldsymbol{\mu}_k\| - 1)^2$ so the magnitude of the
mixture means is controlled (and does not decrease to 0 or increase to
$\pm\infty$). We have compared this to von Mises distributions, and find it
more numerically stable, we also find no tangible benefit using more complex
spherical normalizing flow representations \cite{rezende2020normalizing}.
Furthermore, we find that the performance is similar to, if not slightly
superior to, the empirical approximation,
\begin{equation}
    \qrob{\pparam}{\prefr} =
    \frac{1}{\sum_{n=1}^N \lablf_n}
    \sum_{n=1}^N
    \lablf_n \, \indic{\prefr = \prefr_n},
    \label{eq:emppref}
\end{equation}
where $\pparam = \{\prefr_n : \lablf_n = 1\}^N_{n=1}$.
Though on occasions when only a few observations define the Pareto front, we
find that using this representation can lead to an overly exploitative strategy.

\subsection{Sequence variational distributions}
\label{app:var_dist}

In this section we summarize the main variational distribution architectures
considered for \gls{agps} \gls{vsd} and \gls{cbas}.

\paragraph{Causal Transformer.}
For some of the sequence experiments we implemented an auto-regressive (causal)
transformer of the form,
\begin{align}
    \qrob{\qparam}{\obs} &= \categc{\acid_1}{\softmax{\qparam_1}}
        \prod^M_{m=2} \qrobc{\qparam_{d}}{\acid_m}{\acid_{1:m-1}}, \quad \textrm{where}
        \nonumber \\
    \qrobc{\qparam_{1:m}}{\acid_m}{\acid_{1:m-1}} &=
        \categc{\acid_m}{\softmax{\mathrm{DTransformer}_{\qparam_d}\!(\acid_{1:m-1})}}.
        \label{eq:autoregressive-proposal}
\end{align}
For details on the decoder-transformer with a causal mask see
\cite[Algorithm 10 \& Algorithm 14]{phuong2022formal} for maximum likelihood
training and sampling implementation details respectively.

\paragraph{Masked Transformer.}
We also implemented a masked transformer model (mTFM) that learns to mutate an
initial sequence, $\obs'$, at a set of positions, $\mask = [\maskp_1, \ldots,
\maskp_M]$ where $\maskp_m \in \{0, 1\}$ and $\sum_{m=1}^M \maskp_m = O$ for a
mutation budget, $O \in \{1, \ldots, M\}$. The complete generative model is,
\begin{align}
    \qrobc{\qparam}{\obs, \mask}{\obs'} &=
        \qrobc{\qparam_{e}}{\obs}{\obs'[\mask \gets \acid_\text{mask}]}
        \qrobc{\qparam_{oe}}{\mask}{\obs'}.
\end{align}
Where the notation $\obs'[\mask \gets \acid_\text{mask}]$ means we apply a
masking token to the original sequence at the positions indicated by $\mask$.
The mask generation model is,
\begin{align}
    \qrobc{\qparam_{oe}}{\mask}{\obs'} &=
        \multinc{\mask}{\mathrm{NN}_{\qparam_o}\!(\mathrm{ETransformer}_{\qparam_e}\!(\obs'))},
\end{align}
Here $\mathrm{ETransformer}$ is an encoder-transformer, see \cite[Algorithm
9]{phuong2022formal} for details, and $\mathrm{NN}_{\qparam_o}$ is a \gls{nn}
decoder, with a convolutional residual layer for capturing additional local
structure. The token generation model is,
\begin{align}
    \qrobc{\qparam_{xe}}{\obs}{\obs'[\mask \gets \acid_\text{mask}]} &=
        \prod_{m=1}^M
        \begin{cases}
            \indic{\acid_m=\acid'_m} & \text{if} ~ \maskp_m = 0; \\
            \categc{\acid_m}{\mathbf{p}_m} & \text{if} ~ \maskp_m = 1,
        \end{cases} \quad \text{where} \nonumber \\
        [\mathbf{p}_1, \ldots, \mathbf{p}_M] &=
        \msoftmax{
            \mathrm{ETransformer}_{\qparam_e}\!(\obs'[\mask \gets \acid_\text{mask}])
        }. \nonumber
\end{align}
Here $\mathbf{softmax}: \real^{\card{\acidspace}\times M} \to
[0, 1]^{\card{\acidspace} \times M}$.
The same encoder-transformer is shared between the mask and token heads, and
only the masked positions are allowed to sample new tokens. For our experiments
we bias the categorical distribution so that the original token is not
resampled. The choice of the set of seed sequences, $\{\obs'_i\}^J_{j=1}$, can
drastically impact performance of \gls{agps}, \gls{vsd} and \gls{cbas}. A
simple heuristic that we find works well in practice is to uniformly sample
with replacement from the current active Pareto set, $\pareto^t$, so $\obs' \sim
\uniform{\pareto^t}$. This is similar to the strategy used in LaMBO-2, though
since they use \gls{ehvi}, they can maintain only the top-$B$ sequences per
round ranked by \gls{ehvi}, where $B$ is batch size.

Finally, if we use this model as a prior, we drop the conditional mask model,
$\qrobc{\qparam_{oe}}{\mask}{\obs'}$, and draw mask-positions independently
with a Bernoulli distribution, $\bernc{\maskp_m}{p_\text{mask}=0.15}$. Then we
use \cite[Algorithm 12]{phuong2022formal} for training.

We list the configurations of the transformer variational distributions in
\autoref{tab:qsettings}. We use additive positional encoding for all of these
models. When using these models for priors or initialization of variational
distributions, we find that over-fitting can be an issue. To circumvent this,
we use dropout and early stopping, see \autoref{app:fit_prior} for details.

\paragraph{Conditioning.}
As mentioned in the text, for the conditional generative models,
$\qrobc{\qparam}{\obs}{\prefr}$, for \gls{agps}, we use the same
architectures already discussed, but also learn a sequence prefix embedding from
$\prefr$, as well as a simple 1-hidden layer MLPs for implementing
FilM~\cite{perez2018film} adaptation of the sequence token embeddings,
\begin{equation}
    \mathbf{e}_m = \mathbf{e}'_{m} \circ (1 + f_\alpha(\prefr)) + f_\beta(\prefr),
    \quad \text{where} \quad
    \mathbf{e}_0 = f_\text{prefix}(\prefr).
\end{equation}
Where $\mathbf{e}'_m$ are the unmodified token embeddings (i.e.\ outputs of $\mathrm{DTransformer}$ 
or $\mathrm{ETransformer}$), and $\circ$ indicates element-wise product. $f_\alpha$ and $f_\beta$ are
the FiLM MLPs, and $f_\text{prefix}$ is the prefix embedding MLP input into the transformers (often we find just a linear projection is adequate). 
We initialize the transformer weights in these models from their
non-conditional counterparts when they are used as priors.

\begin{table}[htb]
    \small
    \centering
    \caption{Transformer network configuration for the sequence experiments.}
    \begin{tabular}{r| c c c | c | c}
        & \multicolumn{3}{c|}{\textbf{Ehrlich vs.\ Nat}} &
        \textbf{Bi-grams} & \textbf{Stability vs.\ SASA} \\
        $\downarrow$ \textbf{Property} / $M \rightarrow$ & 15 & 32 & 64 & 32 & > 200 (variable)\\
        \hline
        Layers & 2 & 2 & 2 & 2 & 2 \\
        Network Size & 128 & 128 & 128 & 128 & 256\\
        Attention heads & 4 & 4 & 4 & 4 & 4\\
        Embedding size & 64 & 64 & 64 & 64 & 64\\
        FiLM hidden size & 128 & 128 & 128 & 128 & 128 \\
        Prior dropout $p$ & 0.5 & 0.4 & 0.2 & 0.2 & 0.1 \\
        Mutation budget $O$ & -- & -- & -- & 1 & 1 \\
    \end{tabular}
    \label{tab:qsettings}
\end{table}

\subsection{Class probability estimator architectures}
\label{app:cpe}

For all of our experiments we share the same architecture for both
$\cpe{\lablf}{\mparam}{\obs, \prefr}$ and $\cpe{\lablu}{\uparam}{\obs, \prefr}$. On the
continuous synthetic test functions we use the MLP in \autoref{fig:cpe-arch}
(a), where we simply concatenate the inputs $\obs$ and $\prefr$. Here \texttt{Skip} is a skip connection which implements a residual layer, and \texttt{MaxAndAvg} means
a scalar weighted sum of max and average pooling.

For the sequence experiments we use the convolutional architecture given in
\autoref{fig:cpe-arch} (b). For \gls{vsd} and \gls{cbas} we simply add on
another \texttt{LayerNorm} and \texttt{LeakyReLU} and then an output linear
layer. For \gls{agps} we concatenate $\prefr$ to output of this CNN, and then
pass this concatenation into the MLP in \autoref{fig:cpe-arch} (c).
All architectural properties are listed in \autoref{tab:nnsettings}.

\begin{table}[htb]
    \caption{\Gls{cpe} configurations for \gls{agps}, \gls{vsd} and \gls{cbas}.}
    \small
    \centering
    \begin{tabular}{r|c c c | c | c | c}
        & \multicolumn{3}{c|}{\textbf{Ehrlich vs.\ Nat}} &
        \textbf{Bi-grams} & \textbf{Stability vs.\ SASA} &
        \textbf{Synthetic Fns.} \\
        $\downarrow$ \textbf{Property} / $M \rightarrow$ & 15 & 32 & 64 & 32
        & > 200 (variable) & -- \\
        \hline
        \texttt{E} & 16 & 16 & 16 & 16 & 16 & --\\
        \texttt{C} & 64 & 64 & 64 & 64 & 96 & --\\
        \texttt{Kc} & 5 & 5 & 5 & 5 & 7 & --\\
        \texttt{Kx} & 3 & 3 & 3 & 3 & 5 & --\\
        \texttt{Sx} & 2 & 2 & 2 & 2 & 4 & -- \\
        \texttt{H} & 128 & 128 & 128 & 128 & 192 & $\min\{16 D, 128\}$ \\
        \texttt{hidden\_layers} & -- & -- & -- & -- & -- & 2
    \end{tabular}
    \label{tab:nnsettings}
\end{table}

\begin{figure}[htb]
    \centering
\begin{minipage}{.45\textwidth}%
\begin{lstlisting}
Sequential(
    Linear(
        in_features=D + L,
        out_features=H
    ),
    LayerNorm(),
    LeakyReLU(),
    Dropout(p=0.1),
    *[Skip(
        Linear(
            in_features=H,
            out_features=H
        ),
        LayerNorm(),
        LeakyReLU(),
        Dropout(p=0.1),
    ) for _ in range(hidden_layers)],
    Linear(
        in_features=H,
        out_features=1
    ),
)
\end{lstlisting}
\centering
(a) Continuous MLP architecture
\begin{lstlisting}
Sequential(
    Linear(
        in_features=H + L,
        out_features=H
    ),
    LeakyReLU(),
    LayerNorm(),
    Skip(
        Linear(
            in_features=H,
            out_features=H
        ),
        LeakyReLU(),
    ),
    Linear(
        in_features=H,
        out_features=1
    ),
\end{lstlisting}
\centering
(c) Sequence-preference concatenation MLP architecture
\end{minipage}%
\hfill
\begin{minipage}{.45\textwidth}%
\begin{lstlisting}
Sequential(
    Embedding_And_Positional(
        num_embeddings=A,
        embedding_dim=E
    ),
    Dropout(p=0.2),
    Conv1d(
        in_channels=E,
        out_channels=C
        kernel_size=Kc
    ),
    GroupNorm(),
    LeakyReLU(),
    Dropout(p=0.1),
    MaxAndAvgPool1d(
        kernel_size=Kx,
        stride=Sx,
    ),
    Skip(
        Conv1d(
            in_channels=C,
            out_channels=C,
            kernel_size=Kc,
        ),
        GroupNorm(),
        LeakyReLU(),
    )
    AdaptiveMaxAndAvgPool1d(),
    Linear(
        out_features=H
    ),
)
\end{lstlisting}
\centering
(b)  Sequence CNN architecture
\vspace{3.35cm}
\end{minipage}%
    \caption{\Gls{cpe} architectures used for the experiments in PyTorch-like
    syntax. $\texttt{A} = \card{\acidspace}$, $\texttt{L} = L$ corresponding to
    $\tars \in \real^L$ and $\texttt{D} = D$ corresponding to $\obs \in
    \real^D$ for the continuous experiments. LaMBO-2 uses the same kernel size
    as our CNNs. See \autoref{tab:nnsettings} for specific property settings.}
    \label{fig:cpe-arch}
\end{figure}

\section{Experimental Details}
\label{app:experiments}

\subsection{Synthetic test functions}
\label{app:syn_fn}

We tested \gls{agps} on a number of popular \gls{moo} synthetic test functions
against strong \gls{gp}-based baselines. All the test functions are summarised
as follows, with additional results presented in \autoref{fig:syn_funcs_ext}.

\textbf{Branin-Currin} ($D=2$, $L=2$): We optimize the negative Branin-Currin
convex pair. We found the negative function has a more interesting
Pareto front while remaining a challenging \gls{mobo} task.

\textbf{DTLZ7} ($D=7$, $L=6$): A higher-dimensional constraint surface, made of
$L$ segments. The outcome dimension is too
high for methods that directly estimate the improvement to hypervolume.

\textbf{ZDT3} ($D=4$, $L=2$): A complex, non-convex front comprised
of several disconnected segments, which stresses an optimizer's capacity for
both exploration and front-segment coverage.

\textbf{DTLZ2 ($D=3, L=2$)}: A smooth, spherical front in the negative orthant,
which tests an algorithm's ability to approximate non-convex curved manifolds
in higher dimensions.

\textbf{DTLZ2 ($D=5, L=4$)}: A higher dimensional instantiation of the DTLZ2
function for testing performance with higher dimensional inputs and objectives.

\textbf{GMM ($D=2, L=2$)}: Here each objective is implemented as a Gaussian
mixture model, and so is highly multimodal \cite{daulton2022robust}. This is run
without observation noise.

\begin{figure}[htb]
    \centering
    \includegraphics[width=0.329\linewidth]{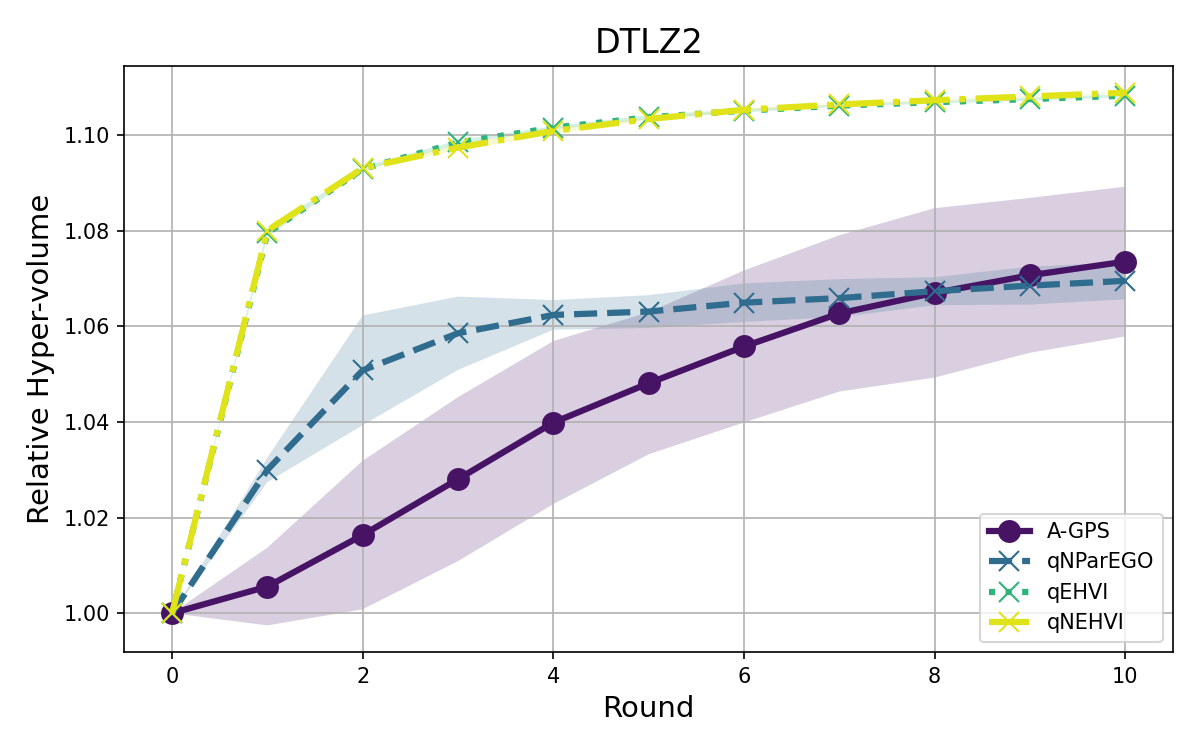}
    \includegraphics[width=0.329\linewidth]{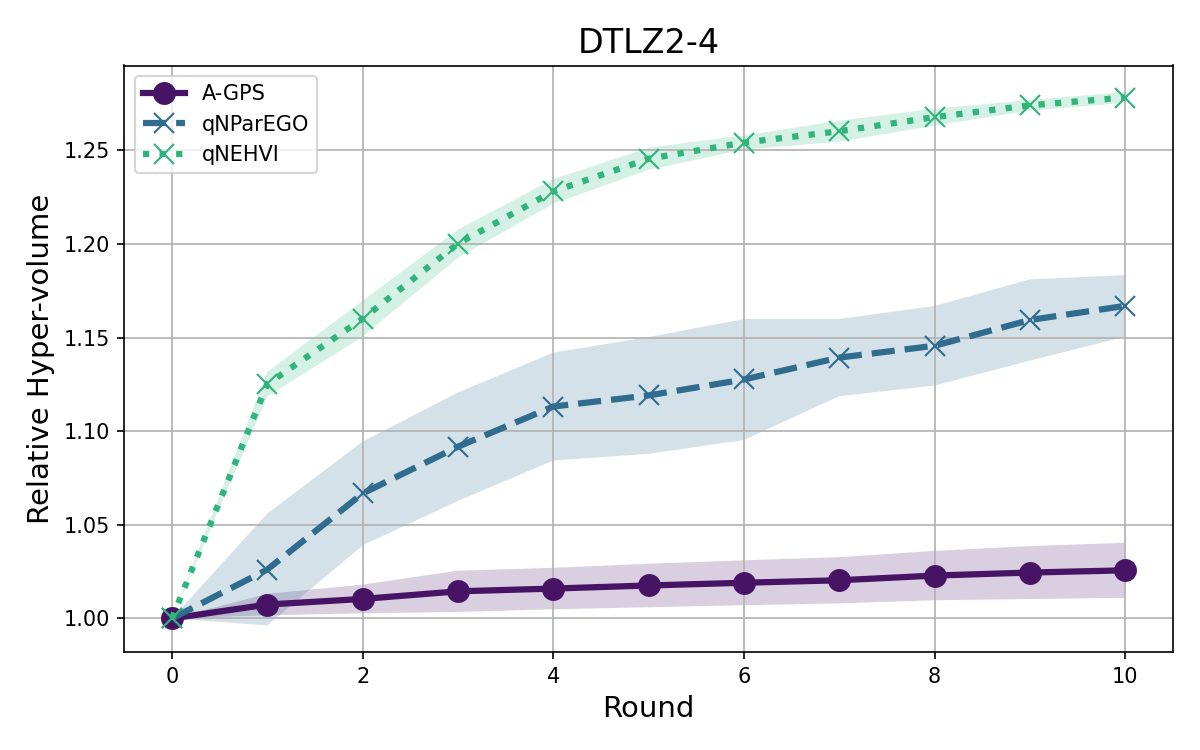}
    \includegraphics[width=0.329\linewidth]{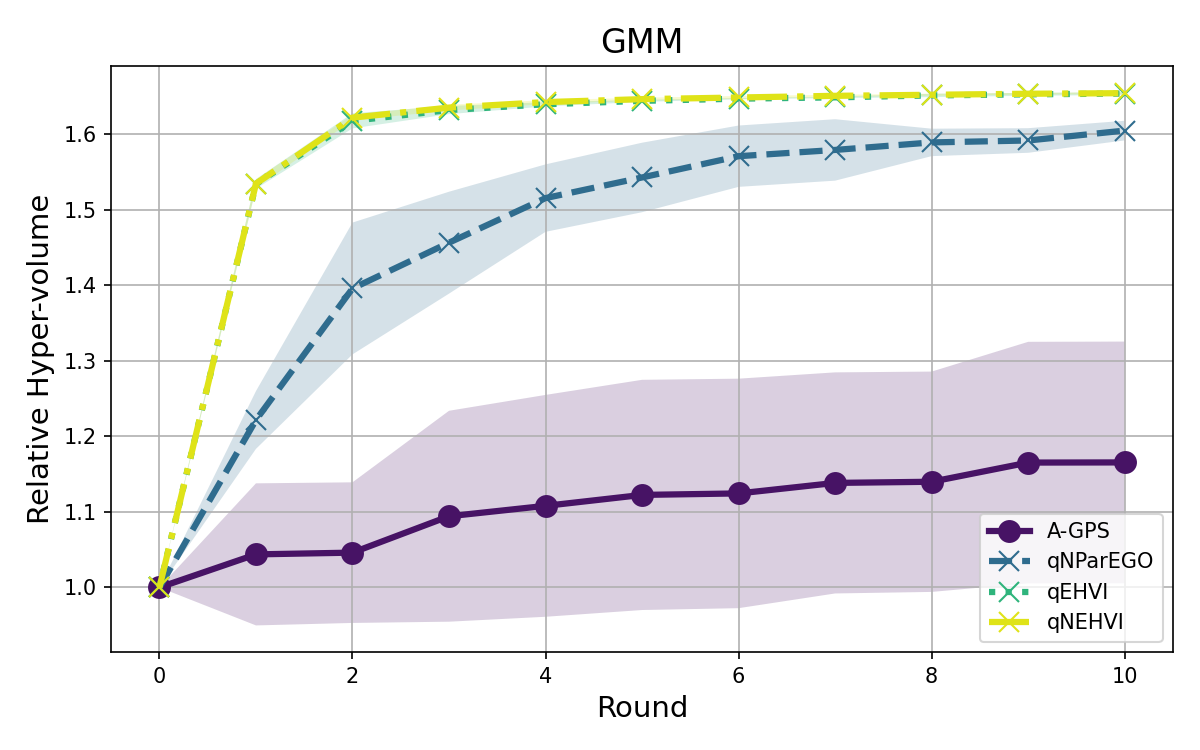} \\
    \subcaptionbox{DTLZ2 $L=2$ \label{sfig:dtlz2}}
    {\includegraphics[width=0.329\linewidth]{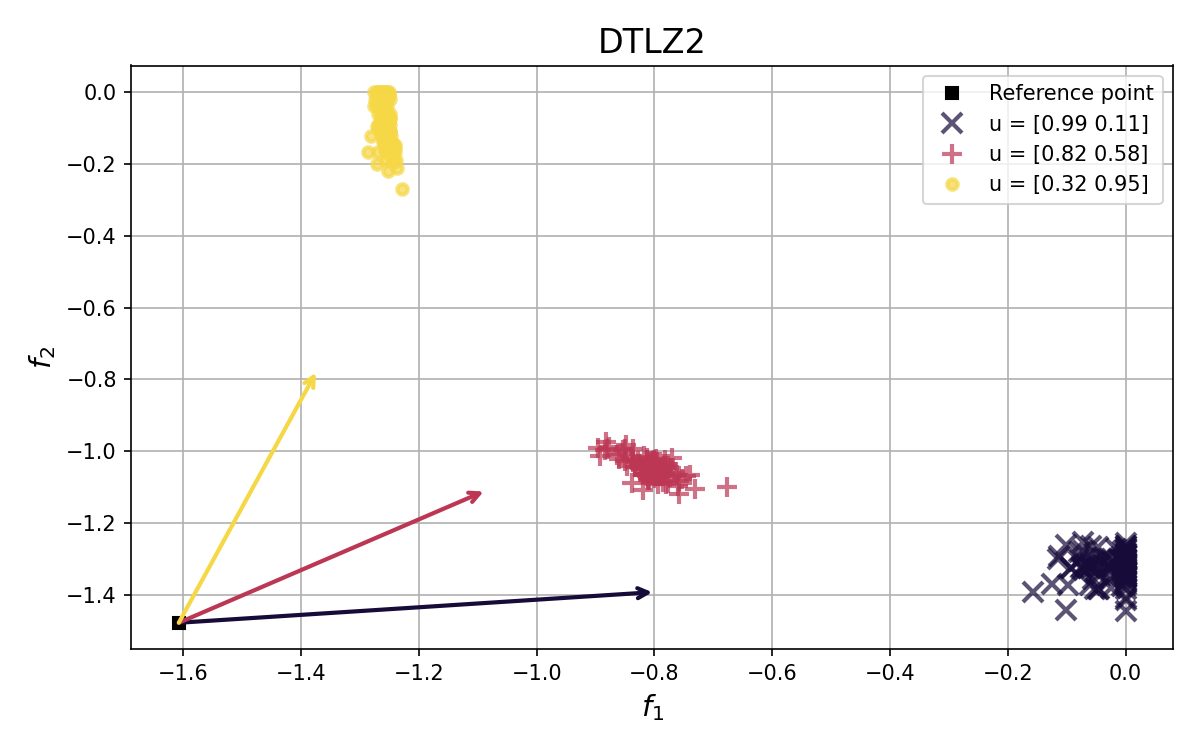}}
    \subcaptionbox{DTLZ2 $L=4$ \label{sfig:dtlz2-4}}
    {\includegraphics[width=0.329\linewidth]{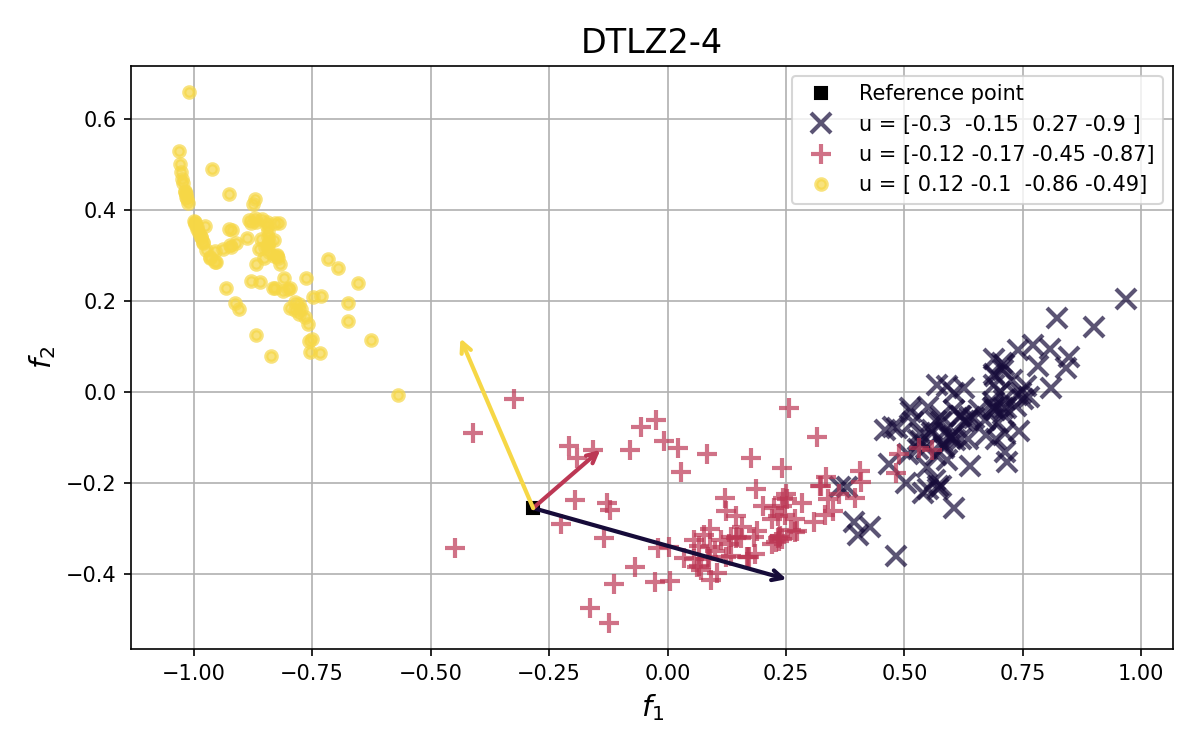}}
    \subcaptionbox{GMM\label{sfig:gmm}}
    {\includegraphics[width=0.329\linewidth]{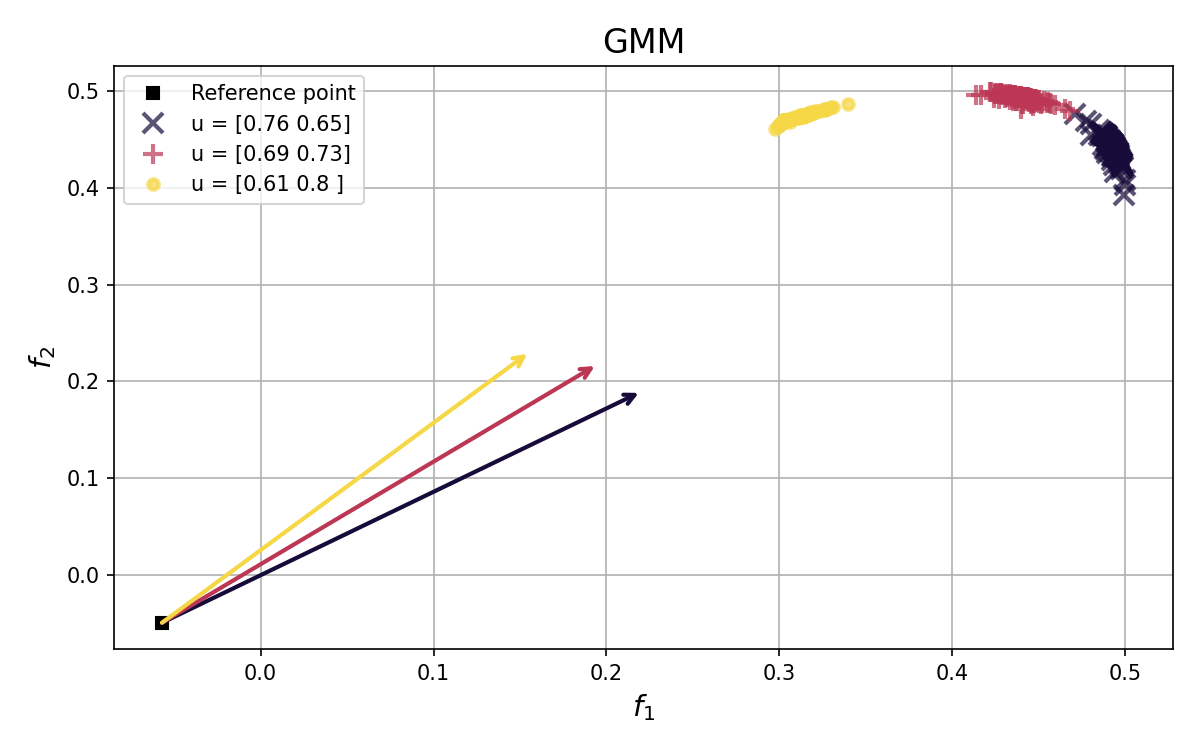}}
    \vspace{-0.3em}
    \caption{
        Experimental results on an additional three test functions commonly
        used in the \gls{mobo} literature. The top row reports \gls{hvi} per
        round, the bottom row demonstrates amortized preference conditioning by
        generating Pareto front samples (DTLZ2 with $L=4$ is a PCA projection
        of the front).%
    }
    \label{fig:syn_funcs_ext}
    \vspace{-1em}
\end{figure}

A detailed description of these functions and/or their Pareto-front geometries
can be found in \cite{deb2002scalable,zhang2007moea, belakaria2019max}.
We use \texttt{BoTorch} \cite{balandat2020botorch} for the implementations of
all the synthetic test functions and the baseline \gls{mobo} methods. All
experiments were initialized using Latin hyper-cube sampling. The original
design space is $\obsspace = [0, 1]^D$ for all problems, and we optimize
directly in this space with all methods. The \gls{gp} based methods use
an optimizer that respects these bounds directly, and we clamp \gls{agps}'s
generative model to these bounds. We found transforming the space (e.g. using
logit-sigmoid transforms) generally led to worse performance.

\begin{table}[htb]
    \caption{Synthetic test functions experimental settings.}
    \footnotesize
    \centering
    \small
    \begin{tabular}{r|c}
    \textbf{Setting} & \textbf{Value} \\
    \hline
    $N_{t=0}$ & 64 \\
    $T$ & 10 \\
    Replicates & 10 \\
    $B$ & 5 \\
    $S$ & 256 \\
    $\refp$ & inferred using BoTorch's \texttt{infer\_reference\_point()} \\
    Base BoTorch model & \texttt{SingleTaskGP} \\
    Optimizer &	Adam for \gls{agps} (lr = $10^{-5}$) and L-BFGS for the GPs \\
    Max optimization iter.\ for \gls{agps} & 3000 \\
    GP restarts & 10 \\
    Bounds & $[0, 1]^D$ \\
    GP kernel & Matern $\nu = 2.5$ ARD (standardized inputs) \\
    GP hyperparameters & LogNormal priors on $\sigma$ and $l$
    \end{tabular}
    \label{tab:synfnsettings}
\end{table}

For \gls{agps} we use the mixture model in \autoref{eq:mixpref} for the
preference direction distribution, and for the conditional generative model we
use a simple MLP,
\begin{equation}
    \qrobc{\qparam}{\obs}{\prefr} =
    \normalc{\obs}{\boldsymbol{\mu}(\prefr), \boldsymbol{\sigma}^2(\prefr)}.
\end{equation}
Here $\boldsymbol{\mu}(\prefr), \boldsymbol{\sigma}^2(\prefr)$ are MLPs with 2
or 4 hidden layers (if $D \geq 3$) of size of $\min(16 D, 256)$ with
skip-connections and layer normalization, making them residual networks. We
otherwise use the same experimental settings for the rest of the experiments,
which are given in \autoref{tab:synfnsettings}.

\subsection{Ehrlich vs.\ Naturalness}
\label{app:ehrlich}

The Ehrlich vs.\ naturalness score benchmark was implemented using the
\texttt{poli} benchmarking library \cite{gonzalez-duque2024poli}, where we
implemented our own ProtGPT2-based naturalness black box, and used the inbuilt
Ehrlich function \cite{stanton2024closed} (not the holo version). For
\gls{agps} we use the aforementioned generative and discriminative models,
otherwise the settings are given in \autoref{tab:ehrlichraspsettings}. We use a
modified version of the LaMBO-2 algorithm \cite{gruver2023protein} from
\texttt{poli-baselines} \cite{gonzalez-duque2024poli}.
We use the following Ehrlich function configurations:
\begin{description}
    \item[$M=15$:] motif length = 3, no.\ motifs = 2, quantization = 3
    \item[$M=32$:] motif length = 4, no.\ motifs = 3, quantization = 4
    \item[$M=64$:] motif length = 4, no.\ motifs = 4, quantization = 4
\end{description}
Additional experimental settings are given in \autoref{tab:ehrlichraspsettings},
and runtimes in \autoref{tab:ehrnat-times}.

\begin{table}[htb]
    \caption{Ehrlich vs.\ Naturalness times (mins).}
    \small
    \centering
    \begin{tabular}{r|c c c|c c c|c c c}
    & \multicolumn{3}{c|}{$M=15$} & \multicolumn{3}{c|}{$M=32$} &
    \multicolumn{3}{c}{$M=64$} \\
    \textbf{Method} & mean & min & max & mean & min & max & mean & min & max \\
    \hline
    A-GPS-TFM&18.10&17.79&18.38&19.40&18.94&19.78&21.43&20.99&21.75 \\
    VSD-TFM&12.43&12.26&12.61&13.62&13.10&14.34&15.85&15.24&17.26 \\
    CbAS-TFM&9.18&8.91&9.51&9.73&9.40&9.98&11.78&11.43&12.11 \\
    LaMBO-2&14.15&13.50&14.74&16.17&15.74&16.41&17.96&17.68&18.75 \\
    Random (greedy)&0.53&0.51&0.54&0.91&0.91&0.92&2.35&2.35&2.36
    \end{tabular}
    \label{tab:ehrnat-times}
\end{table}

\begin{table}[htb]
    \caption{Sequence experimental settings.}
    \small
    \centering
    \begin{tabular}{r|c c c|c|c}
    & \multicolumn{3}{c|}{\textbf{Ehrlich vs.\ Nat.}}
    & \textbf{Bi-grams} & \textbf{Stability vs.\ SASA} \\
    $\downarrow$ \textbf{Setting} / $M \rightarrow$ & 15 & 32 & 64 & 32
    & > 200 (variable) \\
    \hline
    $N_{t=0}$ & 128 & 128 & 128 & 512 & 512 \\
    $T$ & 40 & 40 & 40 & 64 & 64 \\
    Replicates & 5 & 5 & 5 & 5 & 5 \\
    $B$ & 32 & 32 & 32 & 16 & 16 \\
    $S$ & 256 & 256 & 256 & 256 & 256 \\
    $\refp$ & [-1, 0] & [-1, 0] & [-1, 0] & [0, 0, 0] & auto \\
    Threshold $\thresh_0$ percentile & 0.25 & 0.25 & 0.25 & 0.5 & 0.1
    \end{tabular}
    \label{tab:ehrlichraspsettings}
\end{table}

\subsection{Bi-grams}
\label{app:bigrams}

For the bi-grams experiment we implemented our own black-box for the
\texttt{poli} library based on the experiment in \cite{stanton2022accelerating}.
All architectural details are presented in \autoref{app:arch}, with additional
experimental settings in \autoref{tab:ehrlichraspsettings}. We also report
runtimes in \autoref{tab:bi-times}.

\begin{table}[htb]
    \caption{Bigrams times (mins).}
    \small
    \centering
    \begin{tabular}{r|c c c}
    \textbf{Method} & mean & min & max \\
    \hline
        A-GPS-TFM&30.99&30.04&31.89 \\
        A-GPS-mTFM&40.32&38.78&42.19 \\
        VSD-TFM&21.23&20.10&22.14 \\
        VSD-mTFM&28.13&27.93&28.44 \\
        CbAS-TFM&14.63&14.41&14.84 \\
        CbAS-mTFM&24.24&23.11&26.40 \\
        LaMBO-2&42.03&40.63&43.04 \\
        Random (greedy)&0.47&0.47&0.48
    \end{tabular}
    \label{tab:bi-times}
\end{table}

\subsection{Stability vs.\ SASA}
\label{app:foldx}

For the stability vs.\ SASA experiment we used bi-objective black-box from the
\texttt{poli} library, and the seed sequences and settings from the experiment
in \cite{stanton2022accelerating}. All architectural details are presented in
\autoref{app:arch}, with additional experimental settings in
\autoref{tab:ehrlichraspsettings}. We also report runtimes in
\autoref{tab:bi-times}, and show some samples from the \gls{agps} generative
model, as well as the empirical Pareto front in \autoref{fig:agps_foldx_soln}.

\begin{table}[htb]
    \caption{Stability vs.\ SASA times (mins).}
    \small
    \centering
    \begin{tabular}{r|c c c}
    \textbf{Method} & mean & min & max \\
    \hline
    A-GPS-mTFM&120.75&118.52&122.67 \\
    VSD-mTFM&108.93&107.57&109.92 \\
    CbAS-mTFM&97.48&96.11&99.16 \\
    LaMBO-2&59.33&57.55&61.18 \\
    Random (greedy)&11.38&10.87&12.20
    \end{tabular}
\end{table}

\begin{figure}[htb]
    \centering
    \subcaptionbox{
        Samples from the final \gls{agps} generative model
        conditioned on preference direction vectors.
    }
    {\includegraphics[width=.44\textwidth]{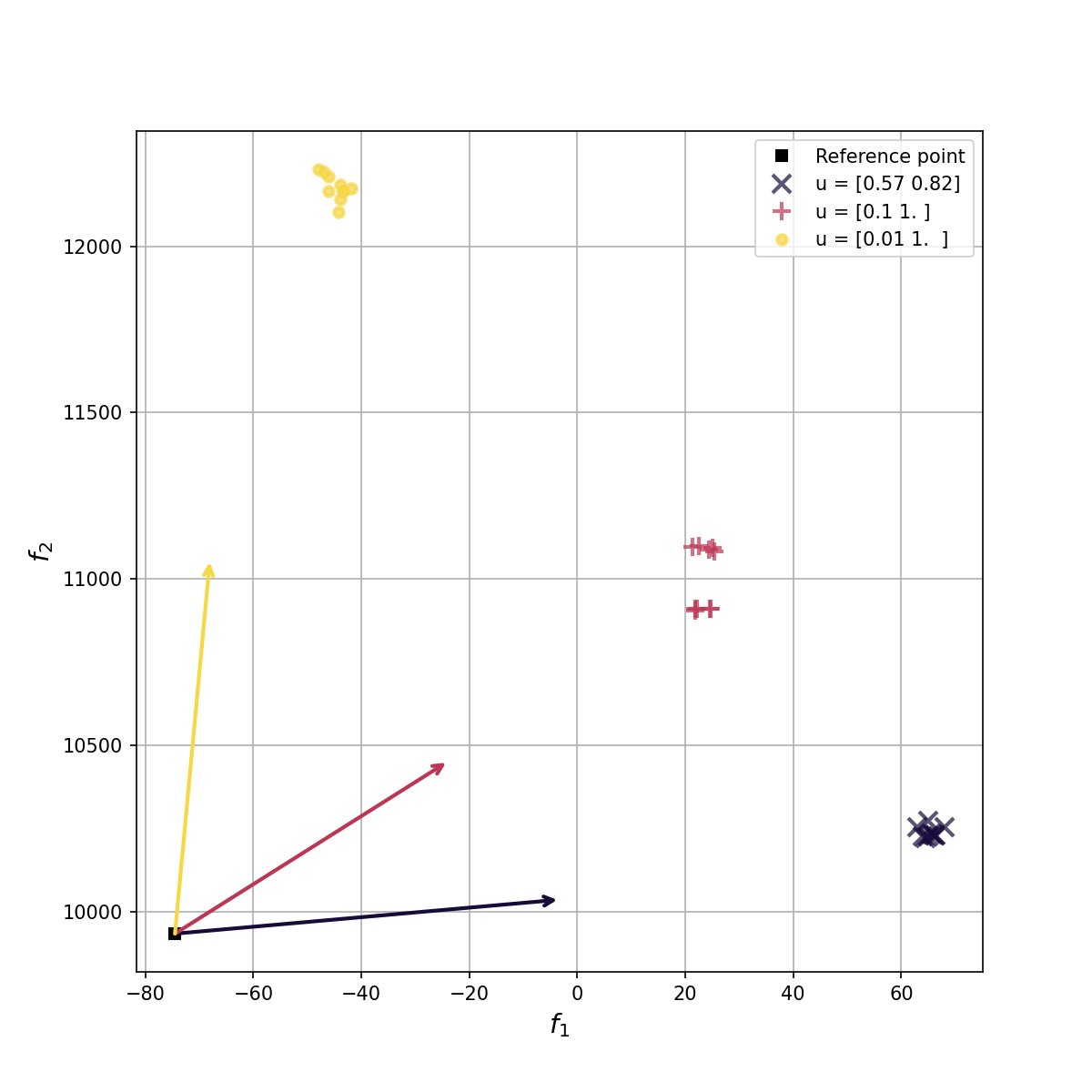}}
    \hfill
    \subcaptionbox{
        FoldX black-box evaluations of sequences generated by \gls{agps},
        colored by evaluation round.
    }
    {\includegraphics[width=.54\textwidth]{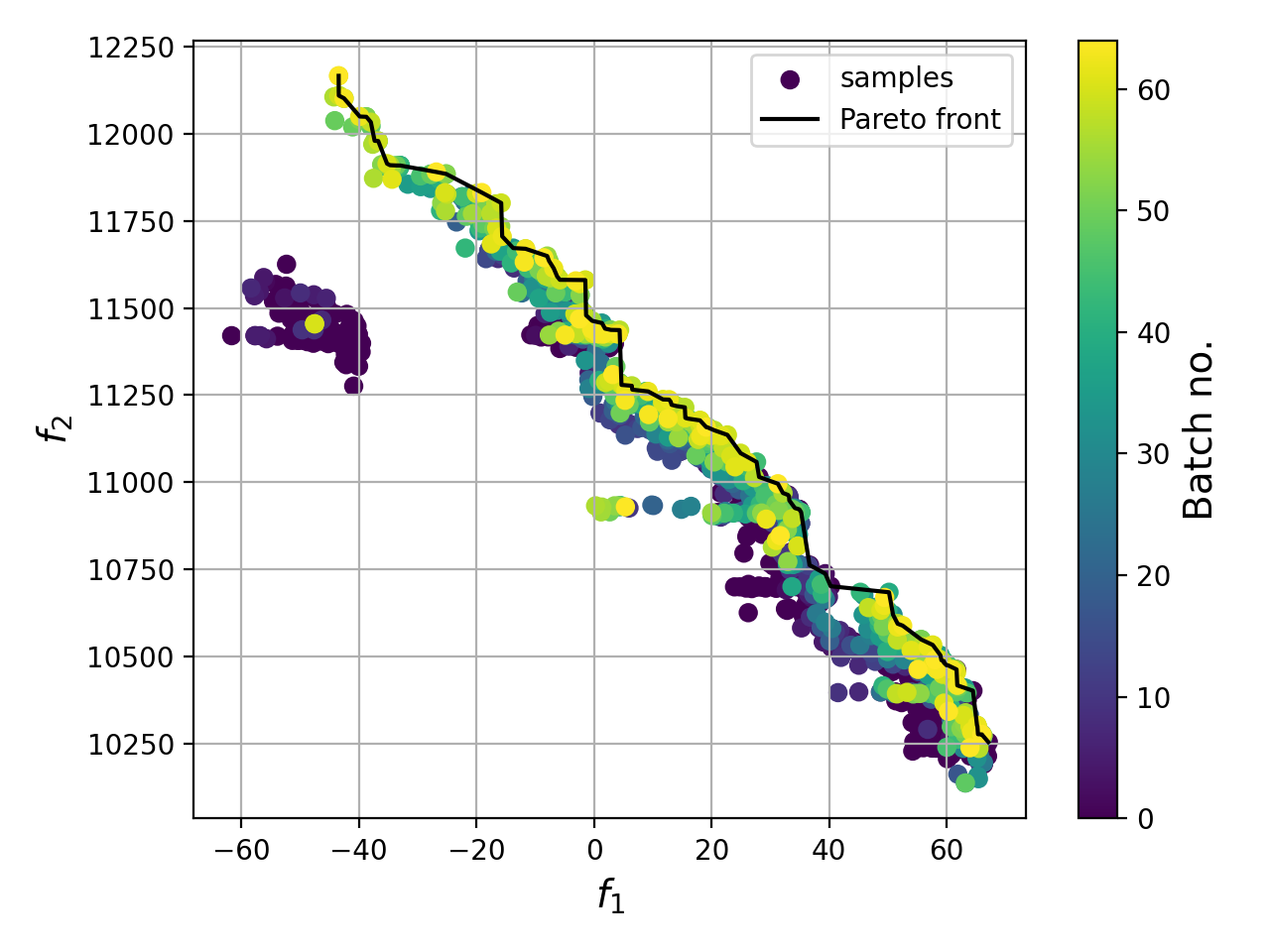}}
    \caption{Additional \gls{agps} sample visualisations from the stability vs.\
    SASA experiment in \autoref{sec:experiments}.}
    \label{fig:agps_foldx_soln}
\end{figure}



\subsection{Computational resources}
\label{app:compute}

All experiments were run on a Dell PowerEdge XE9640 rack server cluster with
NVIDIA H100 GPUs and 4th generation Intel Xeon CPUs. All of our models could
easily fit on one GPU, and typically took less than 2 hours to complete the
experiments.

\section{Ablation Studies}
\label{app:ablation}

In this section we test some of the architectural decisions we have made when
designing \gls{agps}.

\subsection{On-policy vs.\ off-policy gradients}
\label{app:onvsoff}

We introduce a new gradient estimator in \autoref{eq:off-pol-grads} based on
off-policy importance weighting approximations  to the on-policy gradient
estimator used by \cite{steinberg2025variational}. To test its efficacy, we
re-run the Ehrlich vs.\ naturalness score experiments with this new estimator
and the original on-policy variant. We report performance and runtimes in
\autoref{tab:onvsoffruntime} for $M=32$.
There does not seem to be a consistent difference between the two gradient
estimators in terms of hypervolume performance, but runtime is significantly
lower for the off-policy estimator, being almost an order of magnitude less for
the off-policy variant.

\begin{table}[tbh]
    \caption{
        Ehrlich function vs.\ naturalness score ablation for $M=32$. Run time
        and performance comparison for the on-policy gradient estimators
        (`-reinf.') vs. the importance weighted off-policy estimators for the
        \gls{agps} and \gls{vsd} methods. All times are in minutes.
    }
    \centering
    \small
    \begin{tabular}{l|c|c|c|c}
        & \multicolumn{3}{c|}{\textbf{Time (min)}} & \\
        \textbf{Method} & mean & min & max & \textbf{$T=40$ relative HV improvement} \\
        \hline
        A-GPS-TFM & 19.40 & 18.94 & 19.78 & 6.264 (1.159) \\
        VSD-TFM & 13.62 & 13.10 & 14.34 & 6.711 (0.952) \\
        CbAS-TFM & 9.73 & 9.40 & 9.98 & 4.398 (0.652) \\
        LaMBO-2 & 16.17 & 15.74 & 16.41 & 3.074 (1.950) \\
        Random (greedy) & 0.91 & 0.91 & 0.92 & 1.260 (0.169) \\
        \hline
        A-GPS-TFM-reinf. & 105.09 & 100.20 & 109.52 & 6.656 (0.920)\\
        VSD-TFM-reinf. & 97.95 & 90.74 & 115.81 & 6.257 (1.338)\\
    \end{tabular}
    \label{tab:onvsoffruntime}
\end{table}

\subsection{Off-policy gradient estimator samples}

We test the effect on performance of the number of samples, $S$, used for
estimating the gradients of the off-policy estimator,
\autoref{eq:off-pol-grads}, used for \gls{agps} and \gls{vsd} in
\autoref{tab:offpolsamples}. We also do the same for the \gls{cbas} estimator.
We generally find that more samples lead to more performance for all methods,
however this effect plateaus for \gls{agps} and \gls{vsd} starting at $S=256$,
whereas \gls{cbas} still sees improvement beyond $S=512$. This is also something
noted by the original authors in \cite{brookes2019conditioning}.

\begin{table}[htb]
    \caption{
    Round $T=40$ relative hypervolume improvement results for the varying the
    number of samples used to estimate the gradients with the off-policy
    gradient estimator, \autoref{eq:off-pol-grads}, and the \gls{cbas} gradient
    estimator.
    }
    \centering
    \small
    \begin{tabular}{l l|c|c|c|c}
        $M$ & \textbf{Method} & $S=64$ & $S=128$ & $S=256$ & $S = 512$ \\
        \hline
        32 & A-GPS-TFM & 5.663 (1.357) & 6.212 (0.905) & 6.264 (1.159) & 6.245 (1.398) \\
        & VSD-TFM & 5.665 (0.918) & 6.218 (1.041) & 6.711 (0.952) & 6.964 (0.899) \\
        & CbAS-TFM & 4.048 (1.249) & 4.167 (0.707) & 4.398 (0.652) & 5.531 (1.347) \\
        \hline
        64 & A-GPS-TFM & 5.655 (1.646) & 5.522 (1.188) & 6.021 (1.052) & 6.269 (1.808) \\
        & VSD-TFM & 5.435 (0.844) & 5.198 (1.240) & 5.738 (1.305) & 4.633 (0.817) \\
        & CbAS-TFM & 4.168 (1.352) & 3.803 (0.741) & 3.929 (0.564) & 5.092 (0.972)
    \end{tabular}
    \label{tab:offpolsamples}
\end{table}

\subsection{Prior regularization}
\label{app:dropout}

We found that \gls{agps}, \gls{vsd} and \gls{cbas}, which share generative
backbones, were all very sensitive to the choice of prior. For complex
problems best results were obtained in general when the prior is fit to the
$T=0$ training data, however for flexible backbone models like transformers, it
can be easy to overfit. Adding dropout only to the prior generative model is an
effective means of controlling this overfitting when used with a validation set
to infer the number of learning iterations, as in the process outlined in
\autoref{app:fit_prior}.

To demonstrate the effect of over- and under-fitting on performance, we change
the transformer dropout probability while holding all else constant, e.g.\
fitting iterations, on the Ehrlich vs.\ naturalness experiment for $M=32$ in
\autoref{tab:dropout}. The training iterations were tuned while setting
$p=0.4$, and so we would expect other values of dropout to be suboptimal, and
we can indeed see this is so. All methods severely under-perform with and
overfit prior $p \in \{0.2, 0.3\}$. \Gls{agps} and \gls{vsd} perform slightly
worse with an under-fit prior, whereas \gls{cbas}'s performance actually
improves. This is a trend we have noticed in all experiments; prior overfitting
leads to poor performance, and under-fitting is less consequential. \Gls{cbas}
tends to favour exploitation, which explains why a broader prior, encouraging
exploration, helps the method.

\begin{table}[htb]
    \caption{
    Round $T=40$ relative hypervolume improvement results for the varying the
    prior dropout probability of the transformer backbone. We only present
    results for $M=32$, and the training regime was originally optimized for
    $p=0.4$.
    }
    \centering
    \small
    \begin{tabular}{l|c|c|c|c}
        \textbf{Method} & $p=0.2$ & $p=0.3$ & $p=0.4$ & $p = 0.5$ \\
        \hline
        A-GPS-TFM & 3.786 (0.925) & 5.525 (1.224) & 6.264 (1.159) & 5.743 (0.849) \\
        VSD-TFM & 3.585 (0.894) & 4.455 (0.433) & 6.711 (0.952) &  6.496 (1.246) \\
        CbAS-TFM & 2.543 (0.589) & 4.099 (0.678) & 4.398 (0.652) & 5.838 (1.090)
    \end{tabular}
    \label{tab:dropout}
\end{table}

\subsection{Empirical vs.\ parameterized preference direction distribution}
\label{app:prefdistable}

For all of our experiments we use the constrained mixture of Normal
distributions ($K=5$), \autoref{eq:mixpref}, as our parameterized preference
directions distribution, $\qrob{\pparam}{\prefr}$. We now wish to validate this
choice by comparing it to two simpler alternatives: a single constrained Normal
distribution ($K=1$), and the empirical distribution in \autoref{eq:emppref}.

We make these comparisons on the synthetic test functions, as their Pareto
fronts are well sampled and diverse in their shapes. As can be seen in the
results in \autoref{fig:syn_funcs_abl}, there is not a consistent leader among
the different preference distribution parameterizations across all of these
functions. Perhaps the empirical preference distribution strikes the optimal
blend of simplicity and performance --- however we do occasionally find that it
under-performs compared to the parameterized distributions in cases where the
Pareto front is sparsely sampled, which happens in some sequence experiments.

\begin{figure}[htb]
    \centering
    \subcaptionbox{Negative Branin-Currin \label{sfig:bc_a}}
    {\includegraphics[width=0.329\linewidth]{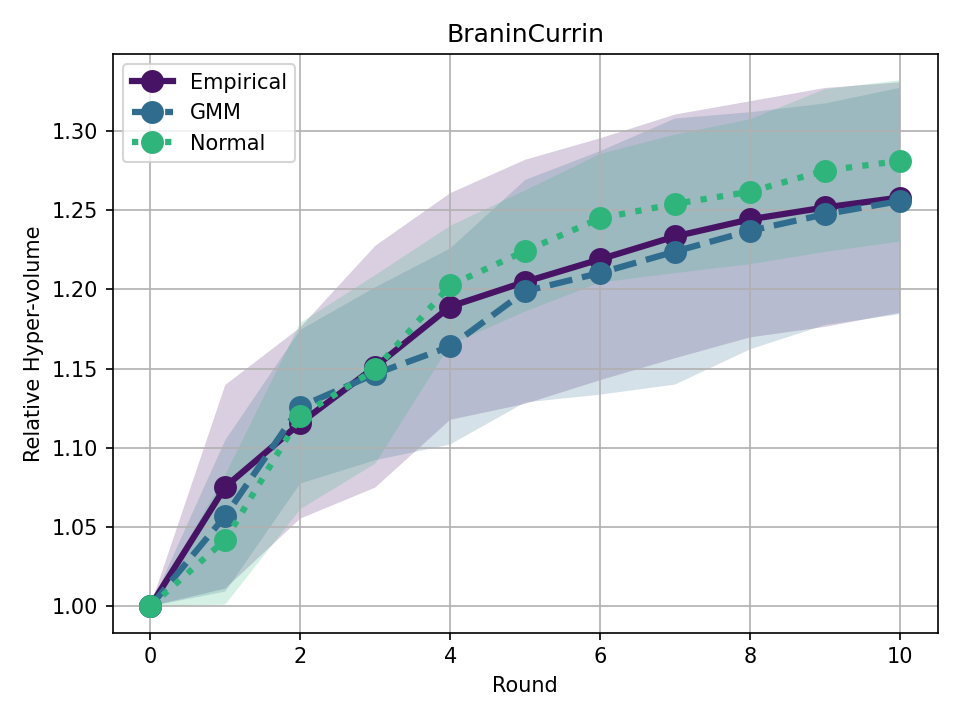}}
    \subcaptionbox{DTLZ7\label{sfig:dtlz7_a}}
    {\includegraphics[width=0.329\linewidth]{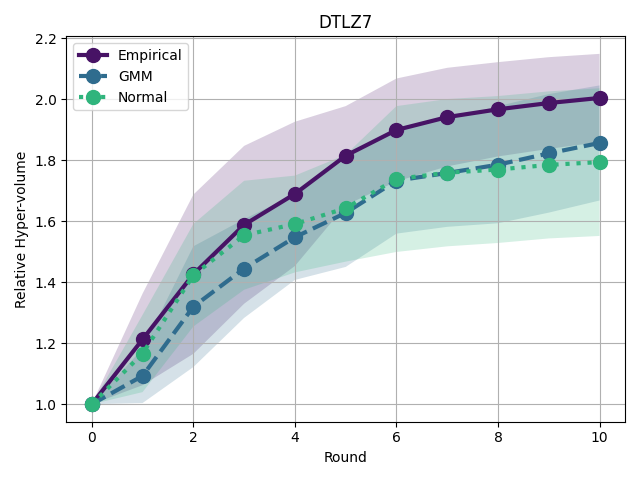}}
    \subcaptionbox{ZDT3\label{sfig:zdt3_a}}
    {\includegraphics[width=0.329\linewidth]{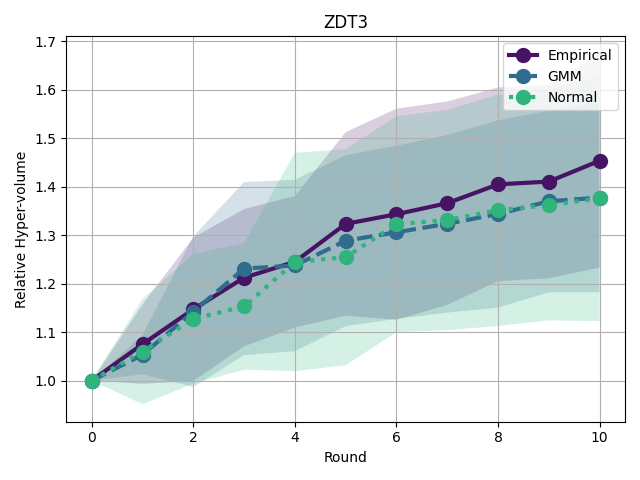}}
    \\
    \subcaptionbox{DTLZ2\label{sfig:dtlz2_a}}
    {\includegraphics[width=0.329\linewidth]{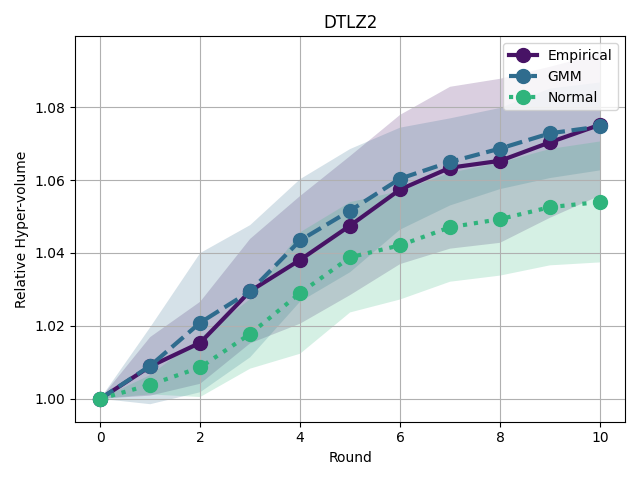}}
    \subcaptionbox{GMM\label{sfig:gmm_a}}
    {\includegraphics[width=0.329\linewidth]{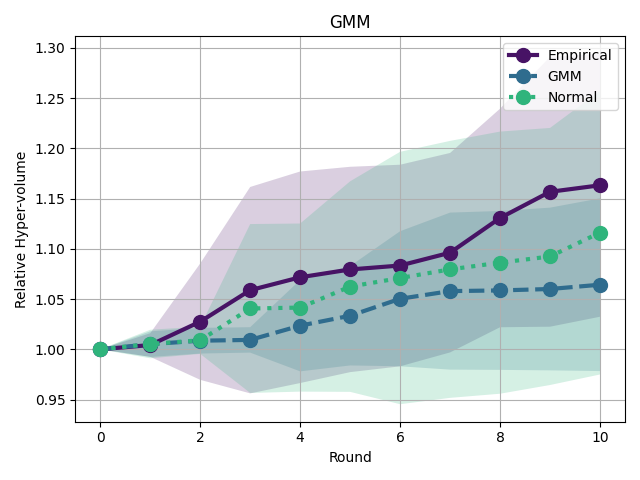}}
    \caption{\gls{agps} preference distribution ablation experimental results on
    the synthetic test functions used for \autoref{sub:exp_synth}.}
    \label{fig:syn_funcs_abl}
\end{figure}

\newpage
\section*{NeurIPS Paper Checklist}

\begin{enumerate}

\item {\bf Claims}
    \item[] Question: Do the main claims made in the abstract and introduction accurately reflect the paper's contributions and scope?
    \item[] Answer: \answerYes{} 
    \item[] Justification: The title describes our proposed method, and the abstract and introduction identifies the problem that our method solves and places it in the context of other machine learning problems.

\item {\bf Limitations}
    \item[] Question: Does the paper discuss the limitations of the work performed by the authors?
    \item[] Answer: \answerYes{} 
    \item[] Justification: The limitations are discussed in the ``Related Work'' and ``Discussion'' sections, and illustrated in our empirical results.

\item {\bf Theory assumptions and proofs}
    \item[] Question: For each theoretical result, does the paper provide the full set of assumptions and a complete (and correct) proof?
    \item[] Answer: \answerYes{} 
    \item[] Justification: We have provided careful derivations of our proposed model, and also a proof of equivalence of indicators in the main text. We provide all assumptions and derivations of this proof in the appendix.

    \item {\bf Experimental result reproducibility}
    \item[] Question: Does the paper fully disclose all the information needed to reproduce the main experimental results of the paper to the extent that it affects the main claims and/or conclusions of the paper (regardless of whether the code and data are provided or not)?
    \item[] Answer: \answerYes{} 
    \item[] Justification: The proposed method is presented in detail in Algorithm~\ref{alg:optloop}, and how it is derived from first principles is described in Section~\ref{sec:vsd_moo}. Details of experimental settings are provided in the main text and supplement, along with supporting additional experiments.

\item {\bf Open access to data and code}
    \item[] Question: Does the paper provide open access to the data and code, with sufficient instructions to faithfully reproduce the main experimental results, as described in supplemental material?
    \item[] Answer: \answerYes{} 
    \item[] Justification: Data reported in this paper are from existing publicly available benchmarks, described in Section~\ref{sec:experiments}. Our open source code is provided at a linked github repository.

\item {\bf Experimental setting/details}
    \item[] Question: Does the paper specify all the training and test details (e.g., data splits, hyperparameters, how they were chosen, type of optimizer, etc.) necessary to understand the results?
    \item[] Answer: \answerYes{} 
    \item[] Justification: The details of the algorithm, including the estimation of gradients and the optimizer are described in Section~\ref{sec:vsd_moo}. Specific settings corresponding to the three experimental sub-sections in Section~\ref{sec:experiments} are provided in Appendix~\ref{app:experiments}

\item {\bf Experiment statistical significance}
    \item[] Question: Does the paper report error bars suitably and correctly defined or other appropriate information about the statistical significance of the experiments?
    \item[] Answer: \answerYes{} 
    \item[] Justification: All our experimental results in Section~\ref{sec:experiments} are with appropriate uncertainty intervals.

\item {\bf Experiments compute resources}
    \item[] Question: For each experiment, does the paper provide sufficient information on the computer resources (type of compute workers, memory, time of execution) needed to reproduce the experiments?
    \item[] Answer: \answerYes{} 
    \item[] Justification: Details of our computing resource, as well as required compute time is provided in Appendix~\ref{app:compute}, in the Supplement.

\item {\bf Code of ethics}
    \item[] Question: Does the research conducted in the paper conform, in every respect, with the NeurIPS Code of Ethics \url{https://neurips.cc/public/EthicsGuidelines}?
    \item[] Answer: \answerYes{} 
    \item[] Justification: This is an algorithmic contribution to multi-objective optimization (which has been studied for decades), and the application is to existing benchmark datasets, so no new ethical questions arise.

\item {\bf Broader impacts}
    \item[] Question: Does the paper discuss both potential positive societal impacts and negative societal impacts of the work performed?
    \item[] Answer: \answerYes{} 
    \item[] Justification: This paper proposes an algorithmic advance, hence there are no new direct impacts. However, a further discussion of societal impacts of multi-objective generation is discussed in Appendix~\ref{app:impacts} in the Supplement.

\item {\bf Safeguards}
    \item[] Question: Does the paper describe safeguards that have been put in place for responsible release of data or models that have a high risk for misuse (e.g., pretrained language models, image generators, or scraped datasets)?
    \item[] Answer: \answerNA{} 
    \item[] Justification: This paper does not release any large models, nor does it reveal any new datasets.

\item {\bf Licenses for existing assets}
    \item[] Question: Are the creators or original owners of assets (e.g., code, data, models), used in the paper, properly credited and are the license and terms of use explicitly mentioned and properly respected?
    \item[] Answer: \answerYes{} 
    \item[] Justification: Data for benchmarks are from previous publications (appropriately cited). Our code builds on Apache 2.0 open source licensed code.

\item {\bf New assets}
    \item[] Question: Are new assets introduced in the paper well documented and is the documentation provided alongside the assets?
    \item[] Answer: \answerYes{} 
    \item[] Justification: The software implementing the algorithmic implementation is well documented, and the documentation is provided in the software package in the linked repository.

\item {\bf Crowdsourcing and research with human subjects}
    \item[] Question: For crowdsourcing experiments and research with human subjects, does the paper include the full text of instructions given to participants and screenshots, if applicable, as well as details about compensation (if any)?
    \item[] Answer: \answerNA{} 
    \item[] Justification: No crowdsourcing nor any research with human subjects.

\item {\bf Institutional review board (IRB) approvals or equivalent for research with human subjects}
    \item[] Question: Does the paper describe potential risks incurred by study participants, whether such risks were disclosed to the subjects, and whether Institutional Review Board (IRB) approvals (or an equivalent approval/review based on the requirements of your country or institution) were obtained?
    \item[] Answer: \answerNA{} 
    \item[] Justification: No crowdsourcing nor any research with human subjects.

\item {\bf Declaration of LLM usage}
    \item[] Question: Does the paper describe the usage of LLMs if it is an important, original, or non-standard component of the core methods in this research? Note that if the LLM is used only for writing, editing, or formatting purposes and does not impact the core methodology, scientific rigorousness, or originality of the research, declaration is not required.
    \item[] Answer: \answerYes{} 
    \item[] Justification: The core method development in this research does not involve large language models.

\end{enumerate}


\end{document}